\newcolumntype{d}[1]{D{.}{.}{#1} }
\definecolor{green}{RGB}{4,101,53}
\definecolor{red}{RGB}{160,30,50}
\newcommand{\clr}[2][]{   
       \cellcolor{red!#1} #2}
\newcommand{\clg}[2][]{   
       \cellcolor{green!#1} #2}
\renewcommand{\cal}[1]{\mathcal{#1}}
\newcommand{\tup}[1]{\langle #1 \rangle}
\newcommand{\R}{\mathbb{R}}
\newcommand{\D}{\mathcal{X}}
\newcommand{\E}{\mathbb{E}}
\newcommand{\X}{\D}
\newcommand{\indicator}{\mathbbm{1}}
\newcommand{\symdiff}{\triangle}
\newcommand{\Parzen}{\text{Parzen}}
\newcommand{\dist}[1]{\lVert #1 \rVert}
\newcommand{\QII}{\mathit{QII}}
\DeclareMathOperator*{\argmin}{arg\,min}
\DeclareMathOperator*{\argmax}{arg\,max}
\newcommand{\mytodo}[2]
{\ifnum\Comments=1
	{\marginpar{\tiny{\color{#1}	#2}}}\fi}
\newcommand{\kuba}[1]{\mytodo{blue}{[JS: #1]}}
\newcommand{\citename}[1]{\citeauthor{#1} (\citeyear{#1})}
\theoremstyle{plain}
\newtheorem{theorem}{Theorem}[section]
\newtheorem{lemma}[theorem]{Lemma}
\theoremstyle{definition}
\theoremstyle{remark}
\title{Axiomatic Characterization of Data-Driven Influence Measures for Classification}
\author{\name Jakub Sliwinski \email jsliwinski@ethz.ch \\
       \addr ETH Zurich\\
       Zurich, Switzerland
       \AND
       \name Martin Strobel \email mstrobel@comp.nus.edu.sg \\
       \addr National University of Singapore\\
       Singapore
       \AND
       \name Yair Zick \email zick@comp.nus.edu.sg \\
       \addr National University of Singapore\\
       Singapore}
\begin{document}
% \nipsfinalcopy is no longer used

\maketitle

\begin{abstract}
We study the following problem: given a labeled dataset and a specific datapoint $\vec x$, how did the $i$-th feature influence the classification for $\vec x$?
We identify a family of {\em numerical influence measures} --- functions that, given a datapoint $\vec x$, assign a numeric value $\phi_i(\vec x)$ to every feature $i$, corresponding to how altering $i$'s value would influence the outcome for $\vec x$. 
This family, which we term {\em monotone influence measures (MIM)}, is uniquely derived from a set of desirable properties, or axioms.
The MIM family constitutes a provably sound methodology for measuring feature influence in classification domains; the values generated by MIM are based on the dataset alone, and do not make any queries to the classifier. While this requirement naturally limits the scope of our framework, we demonstrate its effectiveness on data. 
\end{abstract}

\section{Introduction}\label{sec:intro}
Recent years have seen the widespread implementation of data-driven decision making algorithms in increasingly high-stakes domains, such as finance, healthcare, transportation and public safety. Using novel ML techniques, these algorithms are able to process massive amounts of data and make highly accurate predictions; however, their inherent complexity makes it increasingly difficult for humans to understand {\em why} certain decisions were made. 
Indeed, these algorithms are {\em black-box decision makers}: their inner workings are either hidden from human scrutiny by proprietary law, or (as is often the case) are so complicated that even their own designers are hard-pressed to explain their behavior. By obfuscating their reasoning, data-driven classifiers expose human stakeholders to risks. These may include incorrect decisions (e.g. a loan application that was wrongly rejected due to system error), information leaks (e.g. an algorithm inadvertently uses information it should not have used), or discrimination (e.g. biased decisions against certain ethnic or gender groups). Government bodies and regulatory authorities have recently begun calling for {\em algorithmic transparency}: providing human-interpretable explanations of the underlying reasoning behind large-scale decision making algorithms. Several recent works propose making algorithms more transparent by using numerical influence measures: methods for measuring the importance of every feature in a dataset. However, these works, by and large, do not justify {\em why} their particular methodology is sound. 
Our work takes an axiomatic approach to influence measurement in data-driven domains. Starting from a set of desiderata (or {\em axioms}), we uniquely derive a class of measures satisfying these axioms. Thus, our work provides a
\begin{quote}
	{\em ...formal axiomatic analysis of automatically generated numerical explanations for black-box classifiers.}
\end{quote}

\subsection{Our Contribution}\label{sec:contrib}
{\em Numerical influence measures} are functions that assign a value $\phi_i$ to every feature $i$; $\phi_i$ corresponds to the predicted effect of this feature on the outcome. We identify specific properties (axioms) that any reasonable influence measure should satisfy (Section~\ref{sec:axioms}), and derive a class of influence measures, dubbed {\em monotone influence measures} (MIM), uniquely satisfying these axioms (Section~\ref{sec:char}). 
Next, we show that MIM can be interpreted as the solution to a natural optimization problem, further grounding our methodology (Section~\ref{sec:optimization}). 
Unlike most existing influence measures in the literature, we assume neither knowledge of the underlying decision making algorithm, nor of its behavior on points outside the dataset. Indeed, some methodologies (see Section~\ref{sec:existing} in the supplementary material) are heavily reliant on having access to counterfactual information: what would the classifier have done if some features were changed? This may be a strong assumption: it requires not only access to the classifier, but also the potential ability to use it on nonsensical data points\footnote{For example, if the dataset consists of medical records of men and women, the classifier might need to answer how it would handle pregnant men.}. By making no such assumptions, we provide a general methodology for measuring influence; indeed, many of the methods described in Section~\ref{sec:related} are unusable in the absence of classifier access, or when the underlying classification algorithm is not known. 
%Finally, grounding the measure in the dataset ensures that data distribution is accounted for, rather than explaining the classification in terms of arbitrarily chosen datapoints. Some measures are generated by sampling points uniformly at random at a certain data region;  however, such points can be very unlikely or impossible to occur in practice, and using them can demonstrate a behavior the algorithm will never exhibit in its actual domain.
We show that despite our rather limiting conceptual framework, MIM does surprisingly well on a sparse image dataset, and provides an interesting analysis of a recidivism dataset. We compare the outputs of MIM to other measures, and provide interpretable results. Additional results relate MIM, new influence measures in a statistical cost sharing domain~\cite{balkanski2017cost}, and classic game-theoretic measures~\cite{banzhaf}.

%Axiomatic approaches for influence measurement are common in economic domains. Of particular note are axiomatic approaches in cooperative game theory~\cite{shapleyvalue,shapleyyoung,banzhaf}; we explore the relation of MIM to game-theoretic influence in Section~\ref{sec:gametheory}. \yair{In this format and order it looks a bit odd, as if we just added the section of game theory last minute.}
\subsection{Related Work}\label{sec:related}
Algorithmic transparency has been debated and called for by government bodies \cite{goodman2017explanation,WhiteHouseReport2016BigData,Smith2016WhiteHouseReport}, the legal community \cite{Suzor2015DuffyvGoogle,Charruault2013CourDecassation}, and the media \cite{hofman2017prediction,Angwin2016Accountable,mittelstadt2016ethics,Winerip2016Parole}. The AI/ML research community is paying attention: algorithmic fairness, accountability and transparency is quickly gaining traction in the CS community, with new conferences (e.g. FAT* and AIES), numerous workshops and dozens of publications in mainstream AI/ML conferences. Several ongoing research efforts are informing the design of explainable AI systems (e.g. \citename{kroll2017accountable}, \citename{zeng2017interpretable}), as well as tools that explain the behavior of existing black-box systems (see \citename{weller2017transparency} for an overview); we focus on the latter.  

The work most closely related to ours is that of \citename{Datta2015influence}. \citename{Datta2015influence} axiomatically characterize an influence measure for datasets; however, they interpret influence as a global measure (e.g., what is the overall importance of gender for decision making), whereas we measure feature importance for individual datapoints. Moreover, as \citename{Datta2016algorithmic} show, the measure proposed by \citename{Datta2015influence} outputs undesirable values (e.g. zero influence) on real data; this is due to the fact that the measure requires the existence of counterfactual data: datapoints that differ by only a single feature. As we show in Section~\ref{sec:experiments}, MIM does not require such a dense dataset in order to register influence. \citename{baehrens2010explain} propose a data-driven influence measure that relies on a potential-like approach; as we demonstrate, their methodology fails to satisfy reasonable properties even on simple datasets.

Other approaches in the literature rely on black-box access to the classifier. \citename{Datta2016algorithmic} use an axiomatically justified influence measure based on an economic fairness paradigm, called QII; briefly, QII perturbs feature values and observes the effect this has on the classification outcome. Another line of work using black-box access \cite{Ribeiro2016should,Ribeiro2016model} uses queries to the classifier in a local region near the point of interest in order to measure influence. \citename{adler2016auditing} equate the influence of a given feature $i$ with the ability to infer $i$'s value from the rest of features, after it has been obscured; this idea is the basis for a framework for auditing black-box models. However, this approach assumes that one can make predictions on a dataset with some features removed. \citename{koh2017understanding} have a different take on influence, identifying key {\em datapoints} --- rather than features --- that explain classifier behavior.

Some works study explanations for specific domains, such as neural networks \cite{ancona2017dnns,shrikumar2017deeplift,sundararajan2017axiomatic}, or computer programs~\cite{datta2017programs}; others apply explanations for generating more accurate predictions~\cite{ross2017rightforrightreasons}.

\section{The Model}\label{sec:prelim}
A dataset $\cal X= \tup{\vec x_1,\dots,\vec x_m}$ is given as a list of vectors in $\R^n$ (each dimension $i \in [n]$ is a feature), where every $\vec x_j \in \cal X$ has a unique label $c_j\in \{-1,1\}$; given a vector $\vec x \in \cal X$, we refer to the label of $\vec x$ as $c(\vec x)$. 
%For example, $\cal X$ can be a dataset of bank loan applications, with $\vec x$ describing the applicant profile (age, gender, credit score etc.), and $c(\vec x)$ being a binary decision (accepted/rejected). 
An {\em influence measure} is a function $\phi$ whose input is a dataset $\cal X$, vector labels denoted by $c$, and a specific {\em point of interest} $\vec x\in \cal X$; its output is a value $\phi_i(\vec x,\cal X,c)\in \R$; we often omit the inputs $\cal X$ and $c$ when they are clear from context. The value $\phi_i(\vec x)$ should correspond to how altering the $i$-th feature is predicted to affect the outcome $c(\vec x)$ for $\vec x$ in the following way: if $\phi_i(\vec x)$ is positive (negative), then for points similar to $\vec x$, increasing the value of the i-th feature increases (decreases) the likelihood of assigning the label $c(\vec x)$, and the value $|\phi_i(\vec x)|$ expresses the strength of that effect.

%We denote with $\Fin(\mathcal{Y})$ the set of all finite multisets consisting of members of $\mathcal{Y}$. Given a multiset $\mathcal{X} \in  \Fin(\mathbb{R}^n \times \{-1,1\})$ of $m$ $n$-dimensional data points and labels, the $i$-th dimension of $R^n$ is called the $i$-th feature. An \emph{influence measure} $\phi(\vec x,\mathcal{X})$ is a function that for a given labelled data point $\vec x \in  \mathcal{X}$ assigns a certain amount of influence to every feature. We refer to the point  $\vec x $ in this context as the \emph{point of reference}.  We denote with $\phi_i(\vec x,\mathcal{X})$ the influence of the $i$th feature. Define $c(\vec x)$ as the label of $\vec x$, $x_{n+1}$. By slight abuse of notation we denote with $\vec x = \vec y$ that $\vec x$ and $\vec y$ on the $n$ first dimensions (ignoring the label) and with $\vec x \neq \vec y$ we denote that they disagree in at least one of the first $n$ dimensions. 
\section{Axioms for Data-Driven Influence}\label{sec:axioms}
We are now ready to define our axioms; these are simple properties that we believe any reasonable influence measure should satisfy. 
%We take a geometric interpretation of the dataset $\cal X$; thus, several of our axioms are phrased in terms of geometric operations on $\cal X$.
\begin{enumerate}[style=unboxed,leftmargin=0cm]
	\item {\bf Shift Invariance:} let $\cal X + \vec b$ be the dataset resulting from adding the vector $\vec b \in \R^n$ to every vector in $\cal X$ (not changing the labels). An influence measure $\phi$ is said to be {\em shift invariant} if for any vector $\vec b \in \R^n$, any $i \in [n]$ and any $\vec x \in \cal X$, 
	$$\phi_i(\vec x,\cal X) = \phi_i(\vec x+\vec b,\cal X+\vec b).$$
	In other words, shifting the entire dataset by some vector $\vec b$ should not affect feature importance.\label{ax:shift}
	\item {\bf Rotation and Reflection Faithfulness:} let $A$ be a rotation (or reflection) matrix, i.e. an $n\times n$ matrix with $\det(A) \in \pm 1$; let $A\cal X$ be the dataset resulting from taking every point $\vec x$ in $\cal X$ and replacing it with $A\vec x$. An influence measure $\phi$ is said to be {\em rotation and reflection faithful} if for any rotation matrix $A$, and any point $\vec x \in \cal X$, we have 
	$$A\phi(\vec x,\cal X) = \phi(A\vec x,A\cal X).$$
	In other words, the influence measure $\phi$ is invariant under rotation and reflection. \label{ax:rotRef}
	\item {\bf Continuity:} an influence measure $\phi$ is said to be {\em continuous} if it is a continuous function of $\cal X$.\label{ax:cont}
	\item {\bf Flip Invariance:} let $-c$ be the labeling resulting from replacing every label $c(\vec x)$ with $-c(\vec x)$. An influence measure is {\em flip invariant} if for every point $\vec x \in \cal X$ and every $i \in [n]$ we have 
	$\phi_i(\vec x,\cal X,c) = \phi_i(\vec x,\cal X,-c).$\label{ax:flip}
	\item {\bf Monotonicity:} a point $\vec y\in \R^n$ is said to {\em strengthen} the influence of feature $i$ with respect to $\vec x \in \cal X$ if $c(\vec x) = c(\vec y)$ and $y_i > x_i$; similarly, a point $\vec y \in \R^n$ is said to {\em weaken} the influence of $i$ with respect to $\vec x\in \cal X$ if $y_i > x_i$ and $c(\vec x) \ne c(\vec y)$. An influence measure $\phi$ is said to be \emph{monotonic}, if for any data set $\cal X$, any feature $i$ and any data point $\vec x \in \cal X$ we have $\phi_i(\vec x,\cal X) \le \phi_i(\vec x,\cal X\cup\{\vec y\})$ if $\vec y$ strengthens $i$ w.r.t. $\vec x$, and $\phi_i(\vec x,\cal X) \ge \phi_i(\vec x,\cal X\cup\{\vec y\})$ if $\vec y$ weakens $i$ w.r.t. $\vec x$. \label{ax:mono}
	\item {\bf Non-Bias:}
	suppose that all labels for points in $\cal X$ are assigned i.i.d. uniformly at random (i.e. for all $\vec y \in \cal X$, $\Pr[c(\vec y) = 1]= \Pr[c(\vec y) = -1]$). We call this label distribution $\cal U$; an influence measure $\phi$ satisfies the {\em non-bias} axiom if for all $\vec x \in \cal X$ and all $i\in [n]$ we have 
	\begin{align*}
		\E_{c\sim \cal U}[\phi_i(\vec x,\cal X,c)\mid c(\vec x)] = 0
	\end{align*}
	In other words, when we fix the label of $\vec x$ and randomize all other labels, the expected influence of all features is 0. \label{ax:zeroLabel}
\end{enumerate}
The first four axioms are rather fundamental: indeed, most influence measures in the literature trivially satisfy some variants of these properties. The last two axioms are more interesting. While we strongly believe that there is no one ``universally correct'' set of axioms that all influence measures should satisfy, we believe that our proposed properties make intuitive sense in many application domains.
\subsection{The Case for Monotonicity} \label{sec:whymono}
Monotonicity is a key defining property for characterizing our family of influence measures. Intuitively, it is a consistency requirement: if one is to argue that a person's old age caused their bank loan to be rejected, then finding {\em older} persons whose loans were similarly rejected should strengthen this argument; however, finding older persons whose loans were not rejected should weaken the argument. We mention that monotonicity coupled with flip invariance implies the converse argument as well: adding younger persons whose loans were accepted should increase the influence of age, and adding younger persons whose loans were rejected would decrease it. Of course, in order for the monotonicity property to make any sense, feature states must satisfy some natural order: they should be numerical quantities (e.g. income, age, scores in a test, or shades of a color), states with a natural progression (e.g. education level, or disease severity), or binary states (e.g. gender). Monotonicity does not easily apply to features whose states cannot be naturally ordered (e.g. profession, ethnicity, species). That said, our characterization result holds whenever the dataset has at least one feature whose states can be naturally ordered.
\subsection{Non-Bias: Measuring Influence vs. Measuring Noise}\label{sec:whyrandomlabels}
The Non-Bias axiom states that when labels are randomly generated, no feature should have any influence in expectation. We argue that this requirement is absolutely necessary: any influence measure that fails this test exhibits an inherent bias towards some features, even when labels are completely unrelated to the data. As we show in Section~\ref{sec:existing} in the supplementary material, some measures in the literature fail the non-bias test.

\section{Characterizing Monotone Influence Measures}\label{sec:char}
In what follows, we show that influence measures satisfying the axioms in Section~\ref{sec:axioms} must follow a specific formula, described in Theorem~\ref{thm:charthm}. Below, $\indicator(p)$ is a $\{1,-1\}$-valued indicator (i.e. $1$ if $p$ is true and $-1$ otherwise), and $\dist {\vec x}$ is the Euclidean length of $\vec x$; our analysis admits other distances over $\R^n$, but we stick with $\dist{\cdot}$ for concreteness.
We begin by showing a simple technical lemma.
\begin{lemma}\label{monolemma}
	If an influence measure $\phi$ satisfies both monotonicity and rotation faithfulness, then for any dataset $\cal X$, any datapoint $\vec x \in \X$, and any $\vec y$ where $\vec y$ and $\vec x$ differ in some feature, there exists some $a \in \R$ such that
	\begin{equation} \label{eqmono}
	\phi(\vec x, \cal X \cup \{\vec y\}) - \phi(\vec x, \cal X) = a(\vec y - \vec x);
	\end{equation} 
	furthermore, $a\ge 0$ if $c(x) = c(y)$, and $a \le 0$ otherwise.
\end{lemma}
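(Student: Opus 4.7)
The plan is to show that the marginal contribution
$$\Delta := \phi(\vec x, \cal X \cup \{\vec y\}) - \phi(\vec x, \cal X) \in \R^n$$
must be a scalar multiple of $\vec y - \vec x$, with sign determined by the agreement of labels. First I would use shift invariance to reduce to the case $\vec x = \vec 0$, and then rotation faithfulness to orient coordinates so that $\vec y = r \vec e_1$ where $r = \|\vec y\| > 0$. The claim then becomes $\Delta_j = 0$ for every $j \ge 2$, together with the sign constraint on $\Delta_1$.

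For fixed $j \ge 2$, the key device is the reflection $R_j$ that negates the $j$-th coordinate: it fixes both $\vec 0$ and $\vec y$, so applied to the augmented dataset it only moves the points of $\cal X$. Writing $\Delta(\cal X', \vec z) := \phi(\vec 0, \cal X' \cup \{\vec z\}) - \phi(\vec 0, \cal X')$, and assuming (by a flip argument) that $c(\vec y) = c(\vec x)$, I would introduce perturbations $\vec y_\epsilon^\pm := \vec y \pm \epsilon \vec e_j$ carrying the label of $\vec y$. Monotonicity on coordinate $j$ gives $\Delta_j(\cal X, \vec y_\epsilon^+) \ge 0$ directly, since $\vec y_\epsilon^+$ strengthens feature $j$. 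For $\vec y_\epsilon^-$ monotonicity does not apply directly, but $R_j \vec y_\epsilon^- = \vec y_\epsilon^+$, so reflection faithfulness yields $\Delta_j(R_j \cal X, \vec y_\epsilon^+) = -\Delta_j(\cal X, \vec y_\epsilon^-)$; a second application of monotonicity (this time to the reflected dataset) makes the left side nonnegative, hence $\Delta_j(\cal X, \vec y_\epsilon^-) \le 0$. Continuity of $\phi$ in $\cal X$ now lets us send $\epsilon \to 0$, sandwiching $\Delta_j(\cal X, \vec y)$ between a nonnegative and a nonpositive quantity and forcing $\Delta_j(\cal X, \vec y) = 0$.

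This leaves $\Delta = a(\vec y - \vec x)$ with $a = \Delta_1/r$, and the sign of $a$ follows from a single use of monotonicity on the first coordinate: since $y_1 = r > 0$, $\vec y$ strengthens feature $1$ when $c(\vec y) = c(\vec x)$, giving $a \ge 0$, and weakens it otherwise, giving $a \le 0$. The main obstacle is the one-sidedness of monotonicity, which only delivers information when $y_i$ exceeds $x_i$; the reflection trick above, combined with the continuity limit, is what lets us transfer this to the negative half-space and then to the degenerate $y_j = 0$ case needed to kill the $j$-th component of $\Delta$.
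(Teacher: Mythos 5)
Your argument is essentially correct, but it is a genuinely different proof from the one in the paper. The paper argues by contradiction in one step: if $\phi(\vec x,\cal X\cup\{\vec y\})-\phi(\vec x,\cal X)$ were not a suitable multiple of $\vec y-\vec x$, one could choose a single rotation $A$ making the first coordinate of $A(\vec y-\vec x)$ strictly positive while the first coordinate of the rotated difference vector has the sign forbidden by monotonicity; because the rotation is chosen \emph{after} seeing the offending difference vector, the degenerate coordinates (where $y_j=x_j$) never arise, and the proof uses only rotation faithfulness and monotonicity --- exactly the hypotheses in the lemma statement. Your proof instead fixes the frame first ($\vec x=\vec 0$, $\vec y=r\vec e_1$) and then kills each orthogonal component $\Delta_j$ directly, by perturbing $\vec y$ off the hyperplane $\{y_j=0\}$ in both directions, using the coordinate reflection $R_j$ to convert the one-sided monotonicity inequality into a two-sided sandwich, and passing to the limit $\epsilon\to 0$. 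This is a perfectly sound mechanism and arguably more transparent, but it costs you three additional axioms: shift invariance (to place $\vec x$ at the origin so that $R_j$ fixes it), reflection faithfulness (the paper's proof uses only a rotation), and continuity (for the limit). Since the lemma is only ever applied to measures satisfying all of Axioms 1--6, nothing downstream breaks, but as written your argument proves a strictly weaker statement than the lemma, whose hypotheses are only monotonicity and rotation faithfulness.

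One step needs repair: the reduction ``by a flip argument'' to the case $c(\vec y)=c(\vec x)$ does not work. Flip invariance negates \emph{every} label simultaneously, so it preserves whether $c(\vec y)$ and $c(\vec x)$ agree; it cannot convert a disagreeing pair into an agreeing one. Fortunately no reduction is needed: when $c(\vec y)\neq c(\vec x)$ the perturbed point $\vec y^{+}_\epsilon$ \emph{weakens} feature $j$, so monotonicity gives $\Delta_j(\cal X,\vec y^{+}_\epsilon)\le 0$ and the reflected inequality gives $\Delta_j(\cal X,\vec y^{-}_\epsilon)\ge 0$; the same limit yields $\Delta_j=0$, and the sign claim $a\le 0$ follows from the weakening branch applied to the first coordinate. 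With that case split made explicit (and flip invariance dropped from the argument), the proof is complete.
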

\begin{proof}
	Suppose for contradiction that there are $ \cal X , \vec x \in \cal X$, and $\vec y \in \R^n $ with $c(\vec x) = c(\vec y)$ such that
	\begin{align*}
	\forall a \in \R_+\colon \underbrace{\phi(\vec x,\D) - \phi(\vec x,\D \cup \{ \vec y \})}_{:= \vec l } \neq a  (\vec x - \vec y )
	\end{align*}
Let $A$ be rotation matrix such that $(A\vec l)_1 <0$ and $A (\vec x - \vec y )_1 > 0 $; such a matrix exists since either the two vectors are linearly independent, or $\vec l = -b (\vec x - \vec y )$ for some $b \in R_+$. Since $\phi$ satisfies Axiom~\ref{ax:rotRef} (Rotation), we get 
	$$\phi_1(\vec Ax,A\D) - \phi_1(A\vec x,A\D \cup \{ A\vec y \}) < 0,$$ 
	which contradicts the first case of Axiom~\ref{ax:mono} (Monotonicity). The case where $c(\vec x) \neq c(\vec y)$ can be derived symmetrically.    
\end{proof}
We are now ready to prove our main result. 
\begin{theorem}\label{thm:charthm}
An influence measure $\phi$ satisfies axioms \ref{ax:shift} to \ref{ax:zeroLabel} 
iff it is of the form
\begin{equation}
\phi(\vec x,\cal X,c) = \sum_{\vec y \in \cal X  \setminus \vec x} (\vec y- \vec x)\alpha(\dist{\vec y - \vec x}) \indicator(c(\vec x) = c(\vec y))\label{eq:MIM}
\end{equation}
where $\alpha$ is any non-negative-valued function. 
\end{theorem}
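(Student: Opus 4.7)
The plan is to dispatch the \emph{if} direction by direct substitution of the formula~(\ref{eq:MIM}) into each axiom, and to establish the \emph{only if} direction by reconstructing $\phi$ point-by-point starting from the one-point dataset, leveraging Lemma~\ref{monolemma} at each step. For \emph{if}, shift invariance and rotation/reflection faithfulness are immediate because the summand depends only on $\vec y - \vec x$ and its length; flip invariance follows from the equivalence $c(\vec x)=c(\vec y) \Leftrightarrow -c(\vec x)=-c(\vec y)$; monotonicity uses $\alpha \ge 0$; non-bias uses $\E[\indicator(c(\vec x)=c(\vec y))\mid c(\vec x)] = 0$ when $c(\vec y)$ is an independent fair $\pm 1$ coin; and continuity follows from continuity of $\alpha$.

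For the \emph{only if} direction I would proceed in stages. First, shift invariance reduces $\phi(\vec x, \{\vec x\})$ to $\phi(\vec 0, \{\vec 0\})$, and rotation faithfulness then forces this vector to be fixed by every rotation, hence equal to $\vec 0$; this is the base case. Second, Lemma~\ref{monolemma} expresses each single-point addition as $\phi(\vec x, \cal X \cup \{\vec y\}) - \phi(\vec x, \cal X) = a(\vec y, \vec x, \cal X)(\vec y - \vec x)$ for a signed scalar $a$ whose sign is dictated by $c(\vec x)$ versus $c(\vec y)$. Third, I would show that $a$ depends neither on $\cal X$ nor on the direction of $\vec y - \vec x$: for the former, expand $\phi(\vec x, \{\vec x, \vec y_1, \vec y_2\})$ by inserting $\vec y_1,\vec y_2$ in either order and equate the two expressions; when $\vec y_1 - \vec x$ and $\vec y_2 - \vec x$ are linearly independent, the resulting exchange identity forces the single-point coefficient to survive the presence of the other point, and continuity extends the conclusion to degenerate (collinear) configurations. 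For the directional reduction, given $\vec y, \vec y'$ at equal distance from $\vec x$ with the same label relation, pick a rotation $A$ fixing $\vec x$ with $A\vec y = \vec y'$ and apply rotation faithfulness to the two-point datasets, yielding $a(\vec y,\vec x) = a(\vec y',\vec x)$; hence $a$ reduces to two functions $\alpha^{+}(d), \alpha^{-}(d)$ of the distance $d = \dist{\vec y-\vec x}$ alone. Fourth, applying non-bias to $\cal X=\{\vec x,\vec y\}$ with $c(\vec y)$ uniform and $c(\vec x)$ fixed gives $\tfrac12 \alpha^+(d) + \tfrac12 \alpha^-(d) = 0$, so $\alpha^- = -\alpha^+$; setting $\alpha := \alpha^+$ (non-negative by Lemma~\ref{monolemma}) collapses both cases into $a = \alpha(\dist{\vec y - \vec x})\,\indicator(c(\vec x)=c(\vec y))$. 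Finally, telescoping these increments from $\{\vec x\}$ up to $\cal X$ recovers~(\ref{eq:MIM}); continuity of $\phi$ in $\cal X$ forces $\alpha$ to be continuous.

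The main obstacle is the order-independence step. The exchange argument in linearly-independent position is essentially a one-line linear-algebra check, but a complete proof must handle the degenerate cases where $\vec y_1 - \vec x$ and $\vec y_2 - \vec x$ are parallel (or where candidate points coincide), since then the exchange identity no longer separates the individual coefficients. The remedy is to perturb the offending points into generic position, apply the clean argument there, and pass to the limit via the Continuity axiom. One must also confirm that the telescoping in the assembly step yields the same result regardless of the insertion order, which again reduces to the same perturbation-and-limit argument.
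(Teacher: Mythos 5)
Your proposal is correct and follows essentially the same route as the paper: the one-point base case forced to $\vec 0$ by rotation, Lemma~\ref{monolemma} for single-point increments, rotation/shift to reduce the coefficient to a function of distance, non-bias plus flip invariance on the two-point dataset to get $\alpha^{-}=-\alpha^{+}$, and an exchange argument using linear independence of $\vec y - \vec x$ and $\vec z - \vec x$ (with continuity handling degenerate configurations) — which is exactly the paper's intersection-of-two-lines inductive step, just phrased as order-independence of insertion.
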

\begin{proof}
Suppose $\phi$ satisfies Axioms \ref{ax:shift} to \ref{ax:zeroLabel}. 
We prove the statement by induction on $ k =|\cal X|$.
First assume that $k = 1$. 
When $k = 1$, $\cal X = \tup{\vec x}$. By shift invariance, $\phi(\vec x, \X) = \phi(\vec 0, \tup{\vec 0})$. The vector $\vec 0$ and $\tup{\vec 0}$ are invariant under rotation; hence, by rotation faithfulness, $\phi(\vec 0, \tup{\vec 0}) = \vec 0$, the only vector invariant under rotation. In other words, whenever the dataset has a single point, all features have zero influence.  

When $k=2$, we have $\X = \tup{\vec x, \vec y}$. 
If $\vec x=\vec y$ all features have zero influence (this is irrespective of whether $c(\vec x) =c(\vec y)$ or $c(\vec x) \ne c(\vec y)$).
Further, note that any set of two points can be translated by shift and rotation to any other set of two points with the same labels and the same euclidean distance between them. Hence, by shift invariance, rotation faithfulness and Lemma~\ref{monolemma},
$$\phi(\vec x) = 
\begin{cases}
    (\vec y - \vec x)\alpha_1(\dist{\vec y - \vec x})       & \quad \text{if } c(\vec x) = c(\vec y)\\
   (\vec y - \vec x)\alpha_2(\dist{\vec y - \vec x}) & \quad \text{if } c(\vec x) \neq c(\vec y),
  \end{cases}
$$
where $\alpha_1$ ($\alpha_2$) is some non-negative (non-positive) valued function that depends only on $\dist{\vec y - \vec x}$. By random labels and flip faithfulness, $\alpha_1 = - \alpha_2$, thus
$\phi(\vec x, \X) = (\vec y - \vec x)\alpha(\dist{\vec y - \vec x}) \indicator(c(\vec x) = c(\vec y))$, where $\alpha$ depends only on $\dist{\vec y - \vec x}$.

Suppose the hypothesis holds when $|\cal X|\le k$. Consider any dataset $\cal Y$ of size $k+1$. The cases where the dataset $\cal Y$ does not contain at least three different points are handled in a manner similar to when $k = 1,2$.  
\kuba{To first talk about three different points and then two distinct points, is so confusing that I'm not sure if it's correct}
Suppose $\cal Y$ contains at least two distinct datapoints $\vec y,\vec z \neq \vec x$. We prove the hypothesis for the case where $\vec y - \vec x$ and $\vec z - \vec x$ are linearly independent; the case where they are linearly dependent follows from continuity (we can `perturb' the points slightly to avoid linear dependency).
By Lemma~\ref{monolemma}
\begin{align*}
\phi(\vec x, \cal Y) \in A =& \{\phi(\vec x, \cal Y \setminus \{\vec y\}) + a(\vec y - \vec x) : a \in \R\}\\
\text{ and }
\phi(\vec x, \cal Y) \in B =& \{\phi(\vec x, \cal Y \setminus \{\vec z\}) + a(\vec z - \vec x) : a \in \R\}.
\end{align*}
Further by the inductive hypothesis, $\phi(\vec x, \cal Y \setminus \{\vec y\})$ equals
\begin{align*}
\phi(\vec x, \cal Y \setminus \{\vec y, \vec z\})  +(\vec z - \vec x)\alpha(\dist{\vec z - \vec x})\indicator(c(\vec x) = c(\vec z))
\end{align*}
and $\phi(\vec x, Y \setminus \{\vec z\})$ equals
\begin{align*}
\phi(\vec x, Y \setminus \{\vec y, \vec z\})+(\vec y - \vec x)\alpha(\dist{\vec y - \vec x})\indicator(c(\vec x) =c(\vec y)).
\end{align*}
%with a similar equation for $\vec z$.
Since $\vec y - \vec x$ and $\vec z - \vec x$ are linearly independent we get,
\begin{align*}
\phi(\vec x, \cal Y) \in  A \cap B &= \{\phi(\vec x, \cal Y \setminus \{\vec y, \vec z\})\\
&+(\vec z - \vec x)\alpha(\dist{\vec z - \vec x})\indicator(c(\vec x)=c(\vec z)) \\ &+ (\vec y - \vec x)\alpha(\dist{\vec y - \vec x})\indicator(c(\vec x)=c(\vec y))\}
\end{align*}
concluding the inductive step.
\end{proof}
We refer to measures satisfying Equation~\eqref{eq:MIM} as {\em monotone influence measures} (MIM). We note that MIM is a {\em family} of influence measures, parameterized by the choice of the function $\alpha$. It may be natural to assume that $\alpha$ is a monotone decreasing function; that is, the further away the point $\vec y$ is from $\vec x$, the lower its effect on $\phi$ should be. However, this assumption does not follow from our analysis.
In what follows, we propose a method of selecting the $\alpha$ parameter, by viewing MIM as a solution to an optimization problem, in a similar manner to \citename{Ribeiro2016should}.

%Optimization
\section{Choosing Optimal MIM Parameters}\label{sec:optimization}
Is MIM a `good' way of measuring influence? If the reader is convinced that the axioms proposed in Section~\ref{sec:axioms} make sense, then our work here is done. In this section we make an additional case for MIM, showing that it is an optimal solution to a natural optimization problem. The results in this section serve an additional important purpose. Our characterization result (Theorem~\ref{thm:charthm}) identifies a {\em family of measures} (MIM), not a unique function, parameterized by the $\alpha$ function in Equation~\eqref{eq:MIM}. Theorem~\ref{thm:charthm} only requires that $\alpha$ is a function of $\dist{\vec x - \vec y}$, but does not indicate what choice of $\alpha$ is appropriate. As we now show, MIM can be seen as a solution to an underlying optimization problem, the parameters of which may indicate the appropriate choice of $\alpha$.

We are given a dataset $\cal X$ and a point of interest $\vec x$. Consider any potential influence vector $\phi$; intuitively, $\phi$ should be a direction, such that moving $\vec x$ along $\phi$ will `increase the chance' or `positively contribute' to the label of $\vec x$ being $c(\vec x)$. For any point $\vec y \in \cal X$ s.t. $c(\vec y) = c(\vec x)$, it is desirable that $\phi$ points towards $\vec y$; if $c(\vec y) \neq c(\vec x)$, $\phi$ should point away from $\vec y$. 

Local points should be assigned more influence than further ones. Assume a function $\alpha_0: \R \to \R$ whose input is $\dist{(\vec y - \vec x)}$; its output is a weightage representing the importance of $\vec y$; intuitively, $\alpha_0$ should be monotone decreasing in its input, assigning higher values to points in a local neighborhood of $\vec x$ and lower importance to points further away. Hence, $\phi(\vec x, \cal X)$ should maximize
\begin{align}
\sum_{\vec y \in \cal X} \alpha_0(\dist{\vec y - \vec x})\cos(\vec y - \vec x, \phi)\indicator(c(\vec x) = c(\vec y))
\label{eq:MIM-opt}
\end{align}
Equation \eqref{eq:MIM-opt} can be thought of as a weighted variant of the total cosine similarity optimization target.
\begin{theorem}
	MIM with the $\alpha$ parameter in \eqref{eq:MIM} set to $\alpha(\dist{\vec y - \vec x}) = \frac{\alpha_0(\dist{\vec y - \vec x})}{\dist{\vec y - \vec x}}$, maximizes \eqref{eq:MIM-opt}.
\end{theorem}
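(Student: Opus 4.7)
The plan is to rewrite the objective \eqref{eq:MIM-opt} in terms of the Euclidean inner product, extract a single target vector $\vec V$ that encodes all the data, and then observe that the MIM formula, under the stated choice of $\alpha$, is precisely equal to $\vec V$; maximizing $f(\phi) = \phi\cdot\vec V/\dist\phi$ over direction is trivial, so the result drops out.

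Concretely, I would first unfold the cosine by writing, for each $\vec y\neq \vec x$,
\[
\cos(\vec y - \vec x,\phi) \;=\; \frac{(\vec y - \vec x)\cdot \phi}{\dist{\vec y-\vec x}\,\dist{\phi}}.
\]
Substituting into \eqref{eq:MIM-opt}, pulling the $1/\dist\phi$ out of the sum, and collecting all terms that do not depend on $\phi$ gives
\[
\sum_{\vec y\in\cal X}\alpha_0(\dist{\vec y-\vec x})\cos(\vec y - \vec x,\phi)\indicator(c(\vec x)=c(\vec y))
\;=\;\frac{\phi\cdot \vec V}{\dist\phi},
\]
where
\[
\vec V \;=\; \sum_{\vec y\in\cal X\setminus\vec x} \frac{\alpha_0(\dist{\vec y - \vec x})}{\dist{\vec y-\vec x}}\,(\vec y - \vec x)\,\indicator(c(\vec x)=c(\vec y)).
\]
(Points with $\vec y = \vec x$ contribute a zero vector and can be dropped.)

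Next, I would note that the quantity $\phi\cdot\vec V/\dist\phi$ is scale-invariant in $\phi$ and, by Cauchy--Schwarz, is maximized exactly when $\phi$ is a positive scalar multiple of $\vec V$ (when $\vec V \neq \vec 0$; otherwise every $\phi$ attains the value $0$ and the claim is vacuous). Finally, substituting $\alpha(\dist{\vec y - \vec x}) = \alpha_0(\dist{\vec y - \vec x})/\dist{\vec y - \vec x}$ into the MIM formula \eqref{eq:MIM} yields
\[
\phi(\vec x,\cal X,c)\;=\;\sum_{\vec y\in\cal X\setminus\vec x}(\vec y - \vec x)\,\frac{\alpha_0(\dist{\vec y - \vec x})}{\dist{\vec y-\vec x}}\,\indicator(c(\vec x)=c(\vec y))\;=\;\vec V,
\]
so MIM with this $\alpha$ is literally a maximizer of \eqref{eq:MIM-opt}, completing the proof.

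There is no real obstacle here; the only subtleties are (i) handling the degenerate case $\vec V = \vec 0$ (in which case any $\phi$ is a maximizer, and in particular the MIM vector is one), and (ii) being explicit that the objective is a function of the \emph{direction} of $\phi$ only, so the theorem should be read as identifying the optimizing direction. Both points can be dispatched in a single sentence.
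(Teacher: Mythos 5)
Your proof is correct and follows essentially the same route as the paper: your inner-product rewriting of the objective as $\phi\cdot\vec V/\dist{\phi}$ is exactly the content of the paper's Lemma~\ref{lem:cosine} (which states the sum equals $\dist{\vec V}\cos(\phi,\vec V)$), and your Cauchy--Schwarz step is the paper's observation that the cosine is maximized when $\phi$ is parallel to $\vec V$. The only cosmetic difference is that the paper folds the indicator away by replacing oppositely-labeled points with their negations and additionally fixes $\dist{\phi}$ to equal the optimal objective value, neither of which changes the substance of the argument.
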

\begin{proof}
For ease of exposition we assume that $\cal X$ has been shifted so that $\vec x = \vec 0$; since MIM is shift invariant this is no loss of generality.
Note that \eqref{eq:MIM-opt} treats a point $\vec y$ with $c(\vec y) \neq c(\vec x)$ as it would the point $\vec z = - \vec y$ with $c(\vec z) = c(\vec x)$. To simplify further, we assume that all points with a different label than $\vec x$ are swapped for their negatives with the same label as $\vec x$, resulting in a simplified formula
\begin{equation}\label{eq1}
\phi(\vec x, \cal X) := \argmax_{\phi \in \R^n} \sum_{\vec y \in \cal X} \cos(\vec y, \phi)\alpha_0(\dist{\vec y}).
\end{equation}
Equation \eqref{eq1} pertains to the direction of $\phi$. Intuitively, the length $\dist{\phi}$ should correspond to how well the problem can be optimized. If the dataset is random, no direction should be particularly good, resulting in a short $\phi$; that is, small influence values for every feature. In the case of the opposite extreme --- all points with the same label as $\vec x$ are in a similar region, and all points with a different label in another --- $\phi$ should be long, indicating high influence towards the points with the same label. Hence, the most natural way to specify the length of $\phi$, again assuming $\vec x = \vec 0$ for simplicity, is:
\begin{equation}
\dist{\phi(\vec x, \cal X)} := \max_{\phi \in \R^n} \sum_{\vec y \in \D} \cos(\vec y, \phi)\alpha_0(\dist{\vec y}).\label{eq2}
\end{equation}

\noindent To show that MIM maximizes \eqref{eq:MIM-opt}, we require the following lemma.
\begin{lemma}\label{lem:cosine}
Let $f : \R^n \to \R$. Given a dataset $\D$, 
$$
\sum_{\vec y \in \D} \cos(\vec y, \phi)f(\vec y)=\dist{\sum_{\vec y \in \D} \frac{\vec y\cdot f(\vec y)}{\dist{\vec y}} } \cos \Big( \phi, \sum_{\vec y \in \D} \frac{\vec y \cdot f(\vec y)}{\dist{\vec y}}  \Big).
$$
\end{lemma}
\begin{proof}
We show that this equation holds for two vectors; the general case easily follows.
$\cos(\vec y, \phi)f(\vec y) + \cos(\vec z, \phi)f(\vec z)$ equals 
\begin{align*}
\frac{\vec y \cdot \phi}{\dist{\vec y} \dist{\phi}} f(\vec y) + \frac{\vec z \cdot \phi}{\dist{\vec z} \dist{\phi}} f(\vec z) &=\\
\frac{\frac{\vec y}{\dist{\vec y}} f(\vec y) \cdot \phi + \frac{\vec z}{\dist{\vec z}} f(\vec z) \cdot \phi}
{\dist{\phi}} =\frac{\left( \frac{\vec y}{\dist{\vec y}} f(\vec y) + \frac{\vec z}{\dist{\vec z}} f(\vec z) \right) \cdot \phi}
{\dist{\phi}} &=\\
\dist{\frac{\vec y}{\dist{\vec y}} f(\vec y) + \frac{\vec z}{\dist{\vec z}} f(\vec z)} \cos \bigg( \phi, \frac{\vec y}{\dist{\vec y}} f(\vec y)  + \frac{\vec z}{\dist{\vec z}} f(\vec z) \bigg)
\end{align*}
\end{proof}
Using Lemma~\ref{lem:cosine} and substituting $\alpha_0(\vec x) = \dist{\vec x} \alpha(\dist{\vec x})$, the right-hand side of Equation (\ref{eq2}) becomes 
\[\max_{\phi \in \R^n} \Bigg( \dist{\sum_{\vec y \in \D} \vec y \alpha(\dist{\vec y})} \cdot \cos \big( \phi, \sum_{\vec y \in \D} \vec y \alpha(\dist{\vec y}) \big) \Bigg).\] 
Hence,
$ \dist{\phi(\vec x, \D)} = \dist{\sum_{\vec y \in \D} \vec y \alpha(\dist{\vec y})}. $
%\begin{align*}
%&\dist{\phi(\vec x, \D)}\\ &:=  \max_{\phi \in \R^n} \Bigg( \dist{\sum_{\vec y \in \D} \vec y \alpha(\dist{\vec y})} \cdot \cos \big( \phi, \sum_{\vec y \in \D} \vec y \alpha(\dist{\vec y}) \big) \Bigg)\\
%&=  \dist{\sum_{\vec y \in \D} \vec y \alpha(\dist{\vec y})}.
%\end{align*}
Combining that with Equation~\eqref{eq1} we get $\phi(\vec x, \cal X) := \sum_{\vec y \in \cal X\setminus\{\vec x\}} \vec y \alpha(\dist{\vec y})$.
Accounting for the simplifications we assumed, we get the general formula:
$$
\phi(\vec x, \cal X) := \sum_{\vec y \in \D} (\vec y - \vec x) \alpha(\dist{\vec y - \vec x}) \indicator(c(\vec x) = c(\vec y)).
$$
\end{proof}

Intuitively, given a point of interest $\vec x \in \cal X$, a monotone influence vector will point in the direction that has the `most' points in $\cal X$ that share a label with $\vec x$. The value $\dist{\phi}$ can be thought of as one's confidence in the direction: if $\dist{\phi}$ is high, this means that one is fairly certain where other vectors sharing a label with $\vec x$ are (and, correspondingly, this means that there are at least some highly influential features identified by $\phi$); in the case that $\dist{\phi}$ is small, the direction of $\phi$ is not a particularly strong indication of where other vectors of the same type can be found. In terms of choosing the right $\alpha$ parameter, Lemma~\ref{lem:cosine} provides a few useful insights: if we select $\alpha(\dist{\vec y -\vec x}) = 1$, then the resulting MIM measure maximizes the function
$\sum_{\vec y \in \cal X}\cos(\vec y- \vec x,\phi)\dist{\vec y - \vec x}$;
in other words, we put {\em more} weight on vectors in $\cal X$ that are more distant from $\vec x$. Similarly, if we choose $\alpha(\dist{\vec y - \vec x}) = \frac{1}{\dist{\vec y - \vec x}}$ then we place equal importance on all points in the dataset, whereas if we set $\alpha(\dist{\vec y - \vec x}) = \frac{1}{\dist{\vec y - \vec x}^2}$, vectors that are farther away from the point of interest are weighted by $\frac{1}{\dist{\vec y - \vec x}}$. This choice of $\alpha$ informs our implementation in Section~\ref{sec:experiments}.

\section{Influence: From Game-Theory to Data Analytics}\label{sec:gametheory}
Influence measurement is studied in various domains; while influence measures in data classification are a relatively recent research topic, influence has been studied extensively in {\em cooperative game theory}. In what follows, we apply our analysis to cooperative games. Beyond its mathematical interest, the purpose of this section is to show that our work is taking a step towards a {\em unified theory of influence measurement}, connecting game theory to a more general domain of influence measurement; indeed, ideas from cooperative games have been successfully applied in the machine learning domain in previous studies~\cite{Datta2015influence,Datta2016algorithmic,sundararajan2017axiomatic}.

%Influence has always played a, well, influential role in the study of cooperative games. 
A cooperative game is given by a set of {\em players} $N = \{1,\dots,n\}$ and a {\em characteristic function} $v:2^N \to \R$, assigning a value $v(S)$ to every subset of players (also referred to as a {\em coalition}). 
Translating to our setting, we can think of the players as features, and of sets as indicator vectors in $\{0,1\}^n$; 
thus, our dataset $\cal X$ consists of all vectors in $\{0,1\}^n$, where the label of the indicator vector corresponding to $S$ (which we denote $\vec e_S$) is the value $v(S)$.  
Note that for a fully faithful translation we'll need all sets to have binary values (i.e. $v(S) \in \{0,1\}$); 
cooperative games where all sets have values in $\{0,1\}$ are known as {\em simple games}\footnote{There are several excellent textbooks on cooperative game theory; we refer our reader to \cite{compcoopbook,coopbook}}; 
however, our definition easily extends to all types of cooperative games. What does MIM look like translated to this domain? Fixing a set $S \subseteq N$, taking the standard Hamming set distance with $\alpha(d) = \frac1d$, we obtain the following equation

\begin{align}
\phi(S) =& \sum_{T \subseteq N} \frac{v(S) - v(T)}{|S\symdiff T|}(\vec e_S - \vec e_T)\label{eq:influence-gt-basic}
\end{align}
In Equation~\eqref{eq:influence-gt-basic}, $S\symdiff T$ is the symmetric difference between $S$ and $T$; furthermore
%the $i$-th coordinate of $\vec e_S - \vec e_T$ is $1$ if $i \in S\setminus T$, $-1$ if $i \in T \setminus S$, and is $0$ otherwise. 
\begin{align*}
(\vec e_S - \vec e_T)_i = &\begin{cases}1 & \mbox{if } i \in S\setminus T\\ -1& \mbox{if }i \in T\setminus S \\ 0. &\mbox{otherwise.}\end{cases}
\end{align*}
If $i \notin S$, $\phi_i(S)$ is of the form
\begin{align}
\phi_i(S) =& \sum_{T:i\in T} \frac{v(T) - v(S)}{|S\symdiff T|}\label{eq:influence-gt-i-notin-S}
\end{align}
One can think about Equation~\eqref{eq:influence-gt-i-notin-S} as a generalization of the key concept of {\em marginal contribution} in cooperative games. The marginal contribution of a player $i\in N$ to a set $S$ is the value $v(S\cup\{i\}) - v(S)$; this value is just one of the summands in Equation~\eqref{eq:influence-gt-i-notin-S}, when one takes $T = S\cup\{i\}$. Intuitively, one can think of $v(T) - v(S)$ as the benefit (or loss) resulting from forming the coalition $T$ (that contains $i$) over forming $S$; normalizing by $|S\symdiff T|$ ensures that player $i$ receives an equal share of the responsibility for this change: the players in $S\symdiff T$ need to take action in order for $T$ to form rather than $S$ (those in $T\setminus S$ need to join and those in $S\setminus T$ need to leave). Thus, Equation~\eqref{eq:influence-gt-i-notin-S} measures the overall capability of $i$ to affect a change in the value of $S$ by adding or removing players from $S$; the total added benefit of doing so is normalized by the extent of the change required. 
%When $i \in S$, we get 
%\begin{align}
%\phi_i(S) =& \sum_{T :i\notin T} \frac{v(T) - v(S)}{|S\symdiff T|}\label{eq:influence-gt-i-in-S}
%\end{align}
%Again, one of the terms in Equation~\eqref{eq:influence-gt-i-in-S} is the term $v(S\setminus \{i\}) - v(S)$, the {\em marginal cost} of losing player $i$, obtained when $T = S\setminus \{i\}$. Thus, Equation~\eqref{eq:influence-gt-i-in-S} can be thought of as a generalization of the term marginal cost in cooperative games. 

Let us take $S = \emptyset$; we obtain 
%$\phi_i(\emptyset) = \sum_{T: i\in T}\frac{v(T)}{|T|}$.
\begin{align}
\phi_i(\emptyset) = \sum_{T: i\in T}\frac{v(T)}{|T|}\label{eq:influence-stat-cost-sharing}
\end{align}
Equation~\eqref{eq:influence-stat-cost-sharing} is of particular interest: the same equation appears in an axiomatic characterization of influence in a data-dependent cooperative setting given by \citename{balkanski2017cost}. In their work, \citeauthor{balkanski2017cost} explore player influence where the dataset $\cal X$ consists of $m$ coalitions and their values (a similar setting has also been explored in \cite{balcan2015learning,sliwinski2017learning}); \citeauthor{balkanski2017cost} show that \eqref{eq:influence-stat-cost-sharing} is a unique way of measuring player influence in the data-driven cooperative games. The only difference between \eqref{eq:influence-stat-cost-sharing} and \citeauthor{balkanski2017cost}'s measure is that rather summing over all sets $T$ such that $i \in T$, one takes only the sets in $\cal X$ that contain $i$. In other words, applying MIM to the cooperative game theoretic setting yields an influence measure that generalizes \citeauthor{balkanski2017cost}'s measure.
In classic cooperative games, one often refers to the overall influence of player $i$ in the game $v$, rather than $i$'s influence with respect to a specific set. This leads us to consider the sum-total of the setwise influence of $i$.
\begin{align}
\psi_i 	 = &\sum_{S:i \notin S}\phi_i(S) = \sum_{S: i \notin S}\sum_{T:i \in T}\frac{v(T) - v(S)}{|S\symdiff T|}\notag\\
		 = &\sum_{S:i \notin S}\phi_i(S) = \sum_{S: i \notin S}\sum_{T:i \notin T}\frac{v(T\cup\{i\}) - v(S)}{|S\symdiff T|+1}\label{eq:generalized-gt-influence}
\end{align}
One popular influence measure in cooperative games is the Banzhaf value~\cite{banzhaf}; this value equates player $i$'s influence with its average marginal contribution over all coalitions. That is, the Banzhaf value is of the form:
\begin{align}
\beta_i = \frac{1}{2^n}\sum_{S\subseteq N\setminus\{i\}}v(S\cup\{i\}) - v(S)\label{eq:banzhaf}
\end{align}
We now show that \eqref{eq:generalized-gt-influence} equals \eqref{eq:banzhaf} (up to a constant depending only on $n$).
\begin{theorem}
	The measure $\psi_i$ described in Equation \eqref{eq:generalized-gt-influence} equals $\frac{2^n(2^n - 1)}{n}\beta_i$.
\end{theorem}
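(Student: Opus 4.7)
The plan is to show that both $\psi_i$ and $\beta_i$ are linear combinations of the marginal contributions $v(U\cup\{i\}) - v(U)$ for $U \subseteq N\setminus\{i\}$, and that in $\psi_i$ every such marginal contribution receives the same coefficient $\frac{2^n-1}{n}$, which matches the definition of $\beta_i$ up to the claimed constant.

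First I would expand $\psi_i$ by swapping the order of summation and collecting coefficients of $v(U\cup\{i\})$ and $v(U)$ separately for each $U \subseteq N\setminus\{i\}$. The coefficient of $v(U\cup\{i\})$ in $\psi_i$ is $\sum_{S \subseteq N\setminus\{i\}} \frac{1}{|S\triangle U|+1}$, while the coefficient of $v(U)$ is the negative of the same expression (with the roles of $T$ and $U$ interchanged, noting the symmetry $|U\triangle T|=|T\triangle U|$). Thus $\psi_i$ can be rewritten as
\begin{equation*}
\psi_i = \sum_{U \subseteq N\setminus\{i\}} c_U \cdot \bigl(v(U\cup\{i\}) - v(U)\bigr),
\qquad
c_U := \sum_{S \subseteq N\setminus\{i\}} \frac{1}{|S\triangle U|+1}.
\end{equation*}

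Next I would show that $c_U$ does not depend on $U$, and in fact equals $\frac{2^n-1}{n}$. For a fixed $U \subseteq N\setminus\{i\}$, the number of $S \subseteq N\setminus\{i\}$ with $|S\triangle U| = k$ is exactly $\binom{n-1}{k}$ (choose which of the $n-1$ coordinates in $N\setminus\{i\}$ to flip relative to $U$), so
\begin{equation*}
c_U = \sum_{k=0}^{n-1} \binom{n-1}{k}\frac{1}{k+1} = \frac{1}{n}\sum_{k=0}^{n-1}\binom{n}{k+1} = \frac{2^n-1}{n},
\end{equation*}
using the elementary identity $\binom{n-1}{k}/(k+1) = \binom{n}{k+1}/n$.

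Combining the two displays gives $\psi_i = \frac{2^n-1}{n}\sum_{U \subseteq N\setminus\{i\}} \bigl(v(U\cup\{i\})-v(U)\bigr) = \frac{2^n(2^n-1)}{n}\beta_i$, as required. The only conceptually nontrivial step is the index-swap that reveals the coefficients of $v(U\cup\{i\})$ and $v(U)$ to be equal in magnitude; once that symmetry is observed, the rest is a one-line binomial identity.
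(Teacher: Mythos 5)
Your proof is correct and follows essentially the same route as the paper's: both reduce $\psi_i$ to $\sum_{U\subseteq N\setminus\{i\}} c_U\,\bigl(v(U\cup\{i\})-v(U)\bigr)$ and show the common coefficient equals $\frac{2^n-1}{n}$. The only differences are in execution: you collect the coefficients of $v(U\cup\{i\})$ and $v(U)$ separately (using the symmetry $|S\triangle U|=|U\triangle S|$) rather than pairing the ordered pairs $(S,T)$ and $(T,S)$, and you count $\#\{S:|S\triangle U|=k\}=\binom{n-1}{k}$ via the bijection $S\mapsto S\triangle U$ instead of the paper's Chu--Vandermonde summation over intersection sizes.
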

\begin{proof}
	Let us write $N\setminus \{i\} = N_{-i}$. We first observe the summands in \eqref{eq:generalized-gt-influence}: note that every pair of coalitions $S,T \subseteq N_{-i}$ appears exactly twice in the summation: once with $v(S\cup\{i\})-v(T)$ in the numerator, and once with $v(T\cup\{i\}) - v(S)$ in the numerator. That is, if we let $\binom{\cal N_{-i}}{2}$ be the set of all pairs of coalitions in $N_{-i}$, then \eqref{eq:generalized-gt-influence} simply equals
	\begin{align}
	\sum_{(S,T)\in \binom{\cal N_{-i}}{2}} \frac{v(T\cup\{i\}) - v(S) + v(S\cup\{i\}) - v(T)}{|S\symdiff T|+1}&=\notag\\
	\sum_{(S,T)\in \binom{\cal N_{-i}}{2}} \frac{v(T\cup\{i\}) - v(T)+ v(S\cup\{i\}) - v(S)}{|S\symdiff T|+1}&=\notag\\	
	\sum_{S\subseteq N_{-i}} \zeta(S)(v(S\cup\{i\}) - v(S))&\label{eq:banzhafalmostthere}	
	\end{align}
	Here, $\zeta(S) = \sum_{k=0}^{n-1}\frac{\zeta_k(S)}{k+1}$,
%	\begin{align}
%	\zeta(S) = \sum_{k=0}^{n-1}\frac{\zeta_k(S)}{k+1}\label{eq:zeta-S}
%	\end{align}
	where $\zeta_k(S)$ is the number of coalitions $T\subseteq N_{-i}$ for which $|S\symdiff T| = k$. Let us compute the value $\zeta_k(S)$. Suppose that $|S\cap T| = r \le k$; in order to have $|S\symdiff T| = k$, $T$ must have exactly $k - r$ elements from $N_{-i}\setminus S$. In other words, $\zeta_k(S)$ equals exactly the number of ways one can choose subsets of size $r$ from $S$, and subsets of size $k - r$ from $N_{-i}\setminus S$, for all values of $r = 0,\dots,k$. Letting $|S| = m$, we have
	\begin{align}
	\zeta_k(S) =& \sum_{r = 0}^k \binom{m}{r}\binom{n-m-1}{k-r}=\binom{n-1}{k}\label{eq:zeta-chuvandermonde}
	\end{align}
%	The last transition is also known as the Chu-Vandermonde identity. 
	Plugging Equation \eqref{eq:zeta-chuvandermonde} into $\zeta(S)$ we obtain that $\zeta(S) = \frac{2^n - 1}{n}$.
%	\begin{align*}
%	\zeta(S) = \sum_{k = 0}^{n-1}\frac{1}{k+1}\binom{n-1}{k} = \frac{2^n -1}{n}
%	\end{align*}
	Putting it all together we obtain that \eqref{eq:banzhafalmostthere} simply equals $\frac{2^n(2^n-1)}{n}\beta_i$
%	\begin{align*}
%	\sum_{S\subseteq N_{-i}}\frac{2^n-1}{n}(v(S\cup\{i\})-v(S)) &= \\
%	\frac{2^n(2^n-1)}{n}\left(\frac{1}{2^n}\sum_{S\subseteq N_{-i}}v(S\cup\{i\}) - v(S)\right)&=\\
%	\frac{2^n(2^n-1)}{n}\beta_i&
%	\end{align*}
	which concludes the proof.
\end{proof}

We mention that the measure proposed by \cite{Datta2015influence} also collapses to the Banzhaf value (times a constant dependent on $n$) in the cooperative game setting.
QII, on the other hand, can be seen as an implementation of the Shapley value to influence in a data domain. 
\section{Comparison to Existing Measures}\label{sec:existing}
In this section we provide an overview of some existing influence measures in data domains, and compare them to MIM. Measuring influence in data domains for algorithmic transparency is a relatively new approach, and has seen a veritable explosion of literature in recent years; we believe it is important to keep abreast of known methodologies and understand the domains where they are most appropriate. 
\subsection{Parzen}
The main idea behind the approach followed by \citename{baehrens2010explain} is to approximate the labeled dataset with a {\em potential function} and then use the derivative of this function to locally assign influence to features. Given a locality measure $\sigma\in \R_+$ and a kernel function 
\begin{align}
k_\sigma(\vec x) = \frac{1}{\sqrt{\pi \sigma^2}}\exp\left(\frac{- \sum_{i=1}^nx_i^2}{2\sigma^2}\right)\label{eq:parzen-kernel}
\end{align} 
The Parzen measure $\phi_{\Parzen_\sigma}(\vec x,\D)$, is given by the derivative of the potential function below at $\vec x$.
\begin{align*}
\mathbb{P}(c(\vec x) = 1 | \vec x) = \frac{\sum_{\vec y \in \D c(\vec y) = 1}k_\sigma (\vec x - \vec y)}{\sum_{\vec y \in \D}k_\sigma (\vec x - \vec y)}.
\end{align*}
\iffalse %comment===========================================
$\phi_{\Parzen_\sigma}(\vec x,X) $
\begin{align*}
&= \frac{
\left(\sum_{c(\vec y) \ne c(\vec x)} k_\sigma(\vec x - \vec y) \right)
\left(\sum_{c(\vec y) = c(\vec x)} k_\sigma(\vec x - \vec y) (\vec x - \vec y) \right)}
{\sigma^2\left(\sum_{\vec y \in X} k_\sigma(\vec x - \vec y)  \right)^2} \\
&-\frac{
\left(\sum_{c(\vec y) = c(\vec x)} k_\sigma(\vec x - \vec y) \right)
\left(\sum_{c(\vec y) \ne c(\vec x)} k_\sigma(\vec x - \vec y) (\vec x - \vec y) \right)}
{\sigma^2\left(\sum_{\vec y \in X} k_\sigma(\vec x - \vec y)  \right)^2}.
\end{align*}
\fi%======================================================
It is easy to check that $\phi_{\Parzen_\sigma}$ satisfies Axioms~\ref{ax:shift} to \ref{ax:flip}. However, Parzen is neither monotonic, nor does it satisfy non-bias. To understand why Parzen fails monotonicity it helps to look at \eqref{eq:parzen-kernel}. In Figure~\ref{fig:ParzenMonotonicity}, we have a single feature ranging from $0$ to $2$; we are measuring influence for the point $\vec x_0$ (marked with a green circle). When we add two more positive labels slightly to its right, monotonicity requires that the value of $\phi_{\Parzen_\sigma}(\vec x_0,\D)$ should not decrease; however, this addition `flattens' the potential function, decreasing the influence of the feature. 
Non-bias is violated on any dataset with at least two distinct points. The underlying problem is the same: $\phi_{\Parzen_\sigma}$ measures only change in labels, so data points with the same label lead to zero influence. This leads to $\phi_{\Parzen_\sigma}$ assigning influence to random noise.

\pgfdeclareplotmark{+)}{\draw (0,0) node {$+$};}
\pgfdeclareplotmark{-)}{\draw[scale = 2](0,0) node {$-$};}
\pgfdeclareplotmark{o)}{\draw[] (0,0) circle (3pt);}
\tikzset{% 
declare function={  x1 = 1.5;	x2 = 1.65; x3 = 1.8; x4=0.3;
					y1 = 0.0; y2 = 0.15; y3 = 0.45;
					sigma = 0.5;
					k(\x) = exp(-0.5*\x*\x/(sigma*sigma )/(sqrt(2*sigma*pi)) ) ; 	
					quotient(\x) = (k(\x-x1)+k(\x-y1)+k(\x-y2)+k(\x-y3)+k(\x-x4);	
					quotient2(\x) = (k(\x-x1)+k(\x-x2)+k(\x-x3)+k(\x-x4)+k(\x-y1)+k(\x-y2)+k(\x-y3));				
					myfun(\x)  = (k(\x-x1)+k(\x-x4))/(k(\x-x1)+k(\x-x4)+k(\x-y1)+k(\x-y2)+k(\x-y3));
                   	myfun2(\x)  = (k(\x-x1)+k(\x-x2)+k(\x-x3)+k(\x-x4))/(k(\x-x1)+k(\x-x2)+k(\x-x3)+k(\x-x4)+k(\x-y1)+k(\x-y2)+k(\x-y3));
                   	parzen(\x)  = (-(k(\x-y1)+k(\x-y2)+k(\x-y3))*(k(\x-x1)*(\x-x1)+k(\x-x4)*(\x-x4))+(k(\x-y1)*(\x-y1)+k(\x-y2)*(\x-y2)+k(\x-y3)*(\x-y3))*(k(\x-x1)+k(\x-x4))/(quotient(\x)*quotient(\x)*sigma*sigma);
                   	parzen2(\x)  = (-(k(\x-y1)+k(\x-y2)+k(\x-y3))*(k(\x-x1)*(\x-x1)+k(\x-x3)*(\x-x3)+k(\x-x2)*(\x-x2)+k(\x-x4)*(\x-x4))+(k(\x-y1)*(\x-y1)+k(\x-y2)*(\x-y2)+k(\x-y3)*(\x-y3))*(k(\x-x1)+k(\x-x2)+k(\x-x3)+k(\x-x4))/(quotient2(\x)*quotient2(\x)*sigma*sigma);             	                 }
}
\begin{figure}
\begin{center}
\begin{tikzpicture}[]
\begin{axis}[scale=0.5, xmin = -0.1, xmax = 2,ymin = -0.1, ymax = 1,
%title=$\mathcal{X}_1$
]
\addplot[color=gray][domain=-0.5:2] {myfun(x)};
\addplot[color = green,mark = +),only marks] coordinates  {(x1,0) (x4,0)};
\addplot[color = red,mark =-),only marks] coordinates  {(y1,0) (y2,0) (y3,0)};
\addplot[blue, quiver={u=0.25,v=parzen(x1)/8},-stealth] coordinates {(x1,myfun(x1)};
\addplot[color = green,mark =o),only marks] coordinates  {(x1,0)};
\end{axis}
\end{tikzpicture}
\begin{tikzpicture}[]
\begin{axis}[scale=0.5,xmin = -0.1, xmax = 2,ymin = -0.1, ymax = 1, yticklabels = \empty,
			legend style={
	        cells={anchor=west},
	        legend pos= north west,
			at ={(0,-0.5)},
			anchor = north west,
			legend columns =2,},legend to name=ref:leg2,
%			title=$\mathcal{X}_2$
			]

\addplot[color = green,mark = +),only marks] coordinates  {(x1,0)(x2,0) (x3,0) (x4,0)};
\addlegendentry{$\vec y \in \D, c(\vec y) = 1$};

\addplot[color=black][domain=-0.5:2] {myfun2(x)};
\addlegendentry{$\mathbb{P}(c(\vec x)=1)$};
\addplot[color = green,mark =o),only marks] coordinates  {(x1,0)};
\addlegendentry{$\vec x_0$};
\addplot[blue, quiver={u=0.25,v=parzen2(x1)/8},-stealth] coordinates {(x1,myfun2(x1)};
\addlegendentry{$\phi_{\Parzen_\sigma}(\vec x_0,\D)$};
\addplot[color = red,mark =-),only marks] coordinates  {(y1,0) (y2,0) (y3,0)};
\addlegendentry{$\vec y \in \D, c(\vec y) = -1$};
\addplot[color=gray,opacity = 0.5][domain=-0.5:2] {myfun(x)};
\addplot[blue, quiver={u=0.25,v=parzen(x1)/8},-stealth,opacity=0.3] coordinates {(x1,myfun(x1)};
\legend{}
\end{axis}
\end{tikzpicture}
\end{center}
%\caption{Here the situation should be clearer ($\phi_{\Parzen_\sigma}(\vec x_0,\D) = $ \begin{tikzpicture}  {\pgfmathparse{parzen2(x1)}
%    \pgfmathprintnumber{\pgfmathresult}}; 
%\end{tikzpicture}$\phantom{0.00}$)} 
\caption{Parzen violates monotonicity; the point of interest $\vec x_0$ is marked with a green circle. Its influence is the slope of the blue arrow above it.
%$\phi_{\Parzen_\sigma}(\vec x_0,\D_1) = $ 
%\begin{tikzpicture}  
%{\pgfmathparse{parzen(x1)}%\pgfmathprintnumber{\pgfmathresult}};
%\end{tikzpicture}
%$\phantom{0.00} > \phi_{\Parzen_\sigma}(\vec x_0,\D_2) = $ 
%\begin{tikzpicture}  
%{\pgfmathparse{parzen2(x1)}\pgfmathprintnumber{\pgfmathresult}};
%\end{tikzpicture}
%$\phantom{0.00} (\sigma = 0.5)$.
}
\label{fig:ParzenMonotonicity}
\end{figure}
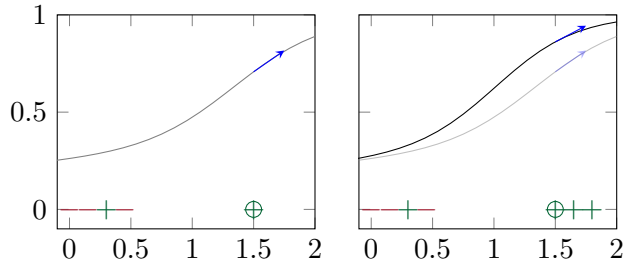
\subsection{LIME} 
\label{compareLime}
The approach followed by \citename{Ribeiro2016should} is based on the idea of using an interpretable classifier approximating the original in a region around $\vec x$; this simpler classifier then can be thought of as an explanation. This approach is termed Local Interpretable Model-agnostic Explanation (LIME).

\citeauthor{Ribeiro2016should} provide a concrete applicable framework, providing explanations in specific application domains. Some parts of this framework, however, lead to obvious violations of the axioms in Section~\ref{sec:axioms}. As an example, LIME maps datapoints to a binary explanation space, rather than considering them directly. This mapping aims to ensure that the result is human-interpretable; however, it clearly violates Axioms~\ref{ax:shift} to \ref{ax:flip}. 
On the other hand, one can draw a close connection between the theoretical framework underlying LIME, and the MIM formulation. In order to do so, it is useful to think of an influence measure as a linear classifier that approximates the data in a region close to the point of interest $\vec x$. We define the classifier based on an influence measure $\phi$ simply as $c_\phi(\vec y) = \indicator(\phi\vec y \ge \phi \vec x)$

We rewrite the core optimization problem in \citename{Ribeiro2016should}, when a linear classifier is used as an explanation:
\begin{align}
\phi_{\text{LIME}}(\vec x) = \argmin_{\phi \in \R^n} \sum_{\vec y \in \mathcal{X}} \alpha(||\vec y - \vec x||) (c(\vec y) - c_\phi (\vec y))^2 \label{eq:LIME}
\end{align}
where $\alpha$ is some non-negative function and we assume for simplicity $c(\vec x) =1$.

Comparing this to Section~\ref{sec:optimization} one can see that at its core, LIME minimizes the mean-squared error, whereas MIM maximizes cosine similarity (see Section~\ref{sec:optimization}). We note that other implementations of LIME (appearing in its source code), use cosine similarity rather than mean-squared error as the target; our results (namely Theorem~\ref{thm:charthm}) indicate that using cosine similarity offers certain theoretical guarantees over other approaches.

%If one uses an linear classifier as the explanation (as is done in \citename{Ribeiro2016should}), the optimisation at the core of LIME 

%The measure developed by \citename{Ribeiro2016should} is based on the idea of finding a best local fit for the classifier a localized region of $\vec x$. If one takes a linear approximation in the local point (as is done in \citename{Ribeiro2016should}), the resulting measure can be compared to MIM. However, the measure actually implemented is somewhat different; at its core lies a discretization step which makes it unsuitable for comparison: it only samples points in a localized neighborhood of $\vec x$, which causes it to fail several basic axioms. In other words, without black-box access to a classifier, and assuming a non-dense dataset, LIME would not be able to compute influence in a local neighborhood of a point. 
%On the other hand, if one analyzes the underlying theoretical model of a local approximation via a linear function, making the suitable adaptation, it is possible to arrive at a measure that falls into the MIM framework. In some sense, this can be seen as a possible justification of the theoretical underpinnings of the LIME model.

\subsection{Counterfactual Influence}
\citename{Datta2015influence} initiate the axiomatic analysis of influence in data domains. Unlike other measures in this section, their approach does not measure feature influence for a given point of interest; rather, it measures the {\em overall influence} of a feature for a given dataset. Following our notation, one can formulate the measure they propose as follows:
\begin{align}
\eta_i(\cal X) = \frac{1}{|\cal X|}\sum_{\vec x \in \cal X}\sum_{y_i: (\vec x_{-i},y_i) \in \cal X} |c(\vec x) - c(\vec x_{-i},y_i)|\label{eq:ijcai2015}
\end{align}
In other words, the measure proposed by \citename{Datta2015influence} does the following: when measuring the influence of the $i$-th feature; for every point $\vec x \in \cal X$, it counts the number of points in $\cal X$ which differ from $\vec x$ by only the $i$-th feature, and in their classification outcome. This follows the idea of {\em counterfactual influence}: the importance of feature $i$ is equivalent to its aggregate ability to change the outcome for points in $\cal X$, assuming that one is only allowed to change the $i$-th coordinate of $\vec x$. The axioms satisfied by \eqref{eq:ijcai2015} turn out to be too stringent: first, the counterfactual measure requires a dataset that contains datapoints differing by only one feature. Second, in many types of data, it is extremely unlikely that changing the state of a single feature will result in a change to the classification outcome (as noted by \citename{Datta2016algorithmic}); indeed, on the dataset we study (Section~\ref{sec:experiments}), Equation~\eqref{eq:ijcai2015} outputs zero influence for all features: no two points differ by only one feature.   
\subsection{Quantitative Input Influence}
\citename{Datta2016algorithmic} propose an influence measure generalizing counterfactual influence. Instead of measuring the effect of changing a single feature, they examine the {\em expected influence of changing a set of features}. More formally, given a set of features $S$, let 
$v(S;\vec x) = \E_{\vec y}[c(\vec x_{-S},\vec y_S)]$
where $(\vec x_{-S},\vec y_S)$ is the vector resulting from replacing the values of features in $S$ with those of features in $\vec y$ ($\vec y$ is sampled from the empirical distribution of $\cal X$). In other words, $v(S;\vec x)$ measures the expected effect of randomizing the values of features in $S$ on the classification outcome of $\vec x$, with samples drawn according to the empirical distribution of $S$ values in the dataset. Given this notion of `value' for a set of features, \citename{Datta2016algorithmic} use {\em the Shapley value~\cite{shapleyvalue}}, a well-known economic measure of influence from coalitional game theory. More formally, given a subset of features $S$, and a feature $i\notin S$, let 
$m_i(S;\vec x) = v(S\cup\{i\};\vec x) - v(S;\vec x)$;
 that is, $m_i(S;\vec x)$ is the {\em marginal effect} of randomizing $i$, given that we have randomized $S$. Let $\cal N_k^i = \{S\subseteq N\setminus\{i\}:|S| = k\}$; 
% One can define the marginal effect of $i$ on a permutation $\sigma:[n] \to [n]$: given a permutation $\sigma \in \Pi_n$ (where $\Pi_n$ is the set of all permutations over $[n]$), let $P_i(\sigma)= \{j \in [n]: \sigma(j)< \sigma(i)\}$ be the set of $i$'s predecessors in $\sigma$; then we define 
% $m_i(\sigma;\vec x) = m_i(P_i(\sigma);\vec x).$
the influence measure defined by \citename{Datta2016algorithmic} is then
\begin{align}
\QII_i(\vec x) = \frac{1}{n!}\sum_{k = 0}^{n-1}k!(n - k - 1)!\left(\sum_{S\in \cal N_k^i}m_i(S;\vec x)\right)
%\E_{\sigma\sim U(\Pi_n)}[m_i(\sigma;\vec x)]=\sum_{\sigma \in \Pi_n}\frac{m_i(\sigma;\vec x)}{n!}
\label{eq:QII}
\end{align}
$\QII_i(\vec x)$ is simply the Shapley value of feature $i$ under the coalitional game defined by $v(S;\vec x)$. By using the Shapley value, QII immediately guarantees several desirable properties `for free' (as the Shapley value satisfies them); moreover, the Shapley value (and thus, QII) is the {\em only} way of measuring influence that can satisfy these properties. However, QII suffers from two major drawbacks. The first is that when computing $v(S;\vec x)$, one assumes the ability to query the classifier on points that are not in the dataset (in particular, when computing $c(\vec x_{-S},\vec y_S)$). Secondly, computing QII is computationally intensive, both when deriving the value of a set of features in $v(S;\vec x)$ and when aggregating marginal effect in \eqref{eq:QII} (\citename{chen2018lshapley} propose workarounds to these issues).
\subsection{Black-Box Access Vs. Data-Driven Approaches}
Influence measures in data domains seem to follow either one of two paradigms. One class of methods relies on {\em black-box access to the underlying classifier}; for example, QII~\cite{Datta2016algorithmic} requires classifier queries in order to compute $v(S;\vec x)$; LIME makes such queries to sample a local region of $\vec x$. {\em Data-driven methods} (e.g. Parzen, MIM) do not require black-box access. 

Is it valid to assume black-box access to a classifier? This depends on the implementation domain one has in mind. On the one hand, having more access, measures such as QII and LIME offer better explanations in a sparse data domain; however, they are essentially unusable when one does not have access to the underlying classifier. Data-driven approaches such as MIM, the counterfactual measure and Parzen are more generic and will work on any given dataset; however, they will naturally not be particularly informative in sparse regions of the dataset. %We now demonstrate this point via experimental analysis.
That said, data-driven models subsume ones assuming black-box access: any data-driven method can be used after an initial black-box query phase: in this way, we add more points to the dataset $\cal X$ as a preprocessing step (for example, in order to obtain a dense region around the point of interest), and then run the data-driven method. 
%%The strength of MIM as a black-box access influence measure is an interesting open question. 
%Experiments

\begin{table*}
	[ht!] 
	\centering
	\begin{tabular}
		{p{0.55in}p{0.01in}p{0.5in}p{0.55in}p{0.01in}p{0.5in}p{0.55in}p{0.01in}p{0.5in}p{0.64in}p{0.1in}} \toprule
		&& \multicolumn{2}{c}{MIM}&&\multicolumn{2}{c}{Parzen}  &&\multicolumn{2}{c}{LIME}\\
		\cmidrule{3-4}
		\cmidrule{6-7}
		\cmidrule{9-10}
		POI  &&  Influence &  Shifted  && Influence&  Shifted  && Influence&  Shifted \\
		\midrule
		\parbox[c]{1em}{\includegraphics[width=0.65in]{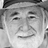}} &&
		\parbox[c]{1em}{\includegraphics[width=0.65in]{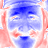}} & 
		\parbox[c]{1em}{\includegraphics[width=0.65in]{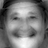}} & &
		\parbox[c]{1em}{\includegraphics[width=0.65in]{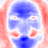}} & 
		\parbox[c]{1em}{\includegraphics[width=0.65in]{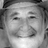}} & & 
		\parbox[c]{1em}{\includegraphics[width=0.65in]{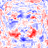}} &
		\parbox[c]{1em}{\includegraphics[width=0.65in]{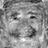}}\\ 
		\parbox[c]{1em}{\includegraphics[width=0.65in]{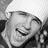}} &&
		\parbox[c]{1em}{\includegraphics[width=0.65in]{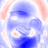}} & 
		\parbox[c]{1em}{\includegraphics[width=0.65in]{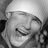}} & &
		\parbox[c]{1em}{\includegraphics[width=0.65in]{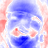}} & 
		\parbox[c]{1em}{\includegraphics[width=0.65in]{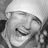}} & & 
		\parbox[c]{1em}{\includegraphics[width=0.65in]{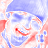}} &
		\parbox[c]{1em}{\includegraphics[width=0.65in]{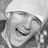}}\\  
		\bottomrule
	\end{tabular}
	\caption{Influence of two different points of interest (POI)} \label{tab:faces}
\end{table*}
\section{Experimental results}\label{sec:experiments}
In what follows, we apply MIM, Parzen and a version of LIME on a facial expression dataset. We ran our experiments using a workstation with a quad core Intel i7 CPU, and 16GB of RAM. We were able to compute each influence vector in $4-5$ seconds.
%\subsection{Facial Expression Recognition}\label{subsec:experimentsFace}
The dataset used for this experiment is a part of the Facial Expression Recognition 2013 dataset \cite{goodfeli2013contest}. The data consists of $12\,156$ $48\times 48$ pixel grayscale images of faces, evenly divided between happy and sad facial expressions. Each pixel is a feature; its brightness level is its parametric value. A parametric Parzen influence measure with $\sigma = 4.7$ and a monotone influence measure with $\alpha(d) = \frac{1}{d^2}$ were run on some of the images. Further, we used a black-box data version of LIME. For the $\alpha$ parameter in Equation~\ref{eq:LIME}, we choose $\alpha_\rho(d) =  \sqrt{\exp(-d^2/\rho^2)}$ with $\rho=3$ as a Kernel function.

%Further, we used a black-box data version of LIME as described in detail in the full version of this work~\cite{sliwinski2018mim}.

The first row of Table~\ref{tab:faces} shows an example picture of a happy face from the dataset, along with a visualization of the influence vectors as produced by MIM, Parzen and LIME. In the images of influence vectors, the color blue (red) indicates positive (negative) influence; that is, for every pixel, the measures indicate that the brighter (darker) the pixel in the original image, the more `happy' (`sad') the face. The third, fifth and seventh column show the point of interest shifted according to the respective influence vector, i.e. the pixels with positive influence were brightened, and darkened if their influence was negative.
According to the MIM influence vector, the factors that contribute to this face looking happy, are a bright mouth with darkened corners, bright eyebrows, bright tone of the face, and a darkened background. Shifting the picture along the influence vector seems to make the person in the picture smile wider, and open their mouth slightly. The Parzen vector differs from the MIM vector mainly in that it suggests dark eyes as indicative of the label and does not indicate the eyebrows as strongly. LIME, while generally agreeing with the other two, results in a more 'shattered' image. Seemingly it's better for a classifier to focus it's weights on a smaller set of features, while for MIM and Parzen you can see that neighbouring pixels actually have similar influence.    

The second row shows another example picture  and its corresponding  influence vectors; however here, all measures fail to offer a meaningful explanation. This is likely to be since the face in the image is tilted, unlike the majority of images in the dataset. This is due to the fact that the dataset does not describe the locality of the image well enough; one can expect this to be the case for many images if the dataset is so small (12000) for such a complex feature space ($48\times 48 = 2304$ features, with each potentially taking $256$ different shades of gray). This exemplifies the dependency of MIM on the dataset provided, and indicates it needs a relatively dense locality in order to perform reasonably well, if black-box access to the classifier or any domain knowledge cannot be assumed.

\subsection{Additional Experiments on the Facial Expression Dataset}\label{sec:supp-facial-expression}

\newlength{\pictureWidthInTable}
\setlength{\pictureWidthInTable}{0.66in}

Tables \ref{tab:morefaces} and \ref{tab:parameters} present additional experimental results on the dataset from \cite{goodfeli2013contest}, with influence vectors computed similarly as in Section~\ref{sec:experiments} of the main paper. In Table \ref{tab:morefaces} depicted are five happy and five sad labeled images, their influence vector and the images shifted along the vector to `enhance' their label as suggested by the influence vector; we use MIM, Parzen and LIME to compute influence. MIM and Parzen produce similar vectors, while the outputs of LIME are visibly more jagged, introducing much more noise to the shifted image. As one might expect, influence vectors for opposing labels tend to have similar but inverted direction.

In Section~\ref{sec:experiments}, we test LIME and Parzen with certain parameter choices. LIME is parameterized by a variable $\rho$ governing the behavior of a distance function $\alpha_\rho$ (see the definition of LIME in Equation~\eqref{eq:LIME} of the main paper); increasing $\rho$ makes LIME assign influence to points further away from the point of interest. We also consider LIME with (inverse) cosine similarity as our choice of $\alpha$ in Equation~\eqref{eq:LIME} of the main paper; this is again parameterized by a $\mu$ parameter controlling the amount of weight placed on points further away from the point of interest. 
Parzen is parameterized by the choice of $\sigma$ in Equation~\eqref{eq:parzen-kernel} in the main paper.

Table~\ref{tab:parameters} highlights the effects of parameter choice for both LIME and Parzen. For Parzen, we vary the $\sigma$ parameter, whereas for LIME we vary $\rho$ (for the Euclidean distance method compared to in the main paper) and $\mu$ (for the cosine similarity version). Intuitively, both parameters control the locality of their respective measures. Small values imply that points closer to the point of interest are considered with more weight or, to frame it in terms of window functions, the weight of points further away is suppressed. Larger parameter values diminish this effect. 

As can be seen in Table~\ref{tab:parameters}, small parameters make the measures place a lot of weight on the point of interest itself, and the resulting influence measure is a near-replica of it. As we increase the parameters, more neighbors are considered, resulting in a more informative influence measure. 
Large values of $\rho$ and $\mu$ make LIME much noisier. This can be explained by the fact that when the parameters are sufficiently large, LIME effectively tries to fit a linear classifier to the entire dataset. This linear classifier is highly inaccurate, resulting at a rather uninformative local influence measure. Parzen doesn't suffer from this problem, it seemingly converges to a generic version of a happy face. 

Our final parameter choices aimed at striking a balance between ignoring the effect of other points in the dataset, and maintaining a locality at the point of influence; there is a rather broad range of parameters which strike this balance, so the precise parameter choice is less critical in this experiment\footnote{We tested several additional values on various images, with similar effects.}.

\begin{table*}[ht] 
\centering
  \begin{tabular}{p{0.55in}p{0.01in}p{0.5in}p{0.55in}p{0.01in}p{0.5in}p{0.55in}p{0.01in}p{0.5in}p{0.64in}p{0.1in}} \toprule
      && \multicolumn{2}{c}{MIM}&&\multicolumn{2}{c}{Parzen}  &&\multicolumn{2}{c}{LIME}\\
      \cmidrule{3-4}
      \cmidrule{6-7}
      \cmidrule{9-10}
      POI  &&  Influence & Shifted POI && Influence&  Shifted POI && Influence&  Shifted POI\\
      \midrule
\parbox[c]{1em}{\includegraphics[width=\pictureWidthInTable]{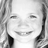}} &&
\parbox[c]{1em}{\includegraphics[width=\pictureWidthInTable]{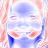}} &
\parbox[c]{1em}{\includegraphics[width=\pictureWidthInTable]{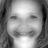}} &&
\parbox[c]{1em}{\includegraphics[width=\pictureWidthInTable]{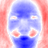}} &
\parbox[c]{1em}{\includegraphics[width=\pictureWidthInTable]{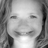}} &&
\parbox[c]{1em}{\includegraphics[width=\pictureWidthInTable]{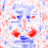}} &
\parbox[c]{1em}{\includegraphics[width=\pictureWidthInTable]{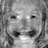}} \\
\parbox[c]{1em}{\includegraphics[width=\pictureWidthInTable]{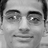}} &&
\parbox[c]{1em}{\includegraphics[width=\pictureWidthInTable]{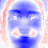}} &
\parbox[c]{1em}{\includegraphics[width=\pictureWidthInTable]{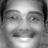}} &&
\parbox[c]{1em}{\includegraphics[width=\pictureWidthInTable]{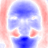}} &
\parbox[c]{1em}{\includegraphics[width=\pictureWidthInTable]{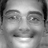}} &&
\parbox[c]{1em}{\includegraphics[width=\pictureWidthInTable]{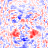}} &
\parbox[c]{1em}{\includegraphics[width=\pictureWidthInTable]{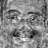}} \\
\parbox[c]{1em}{\includegraphics[width=\pictureWidthInTable]{MIM/Alpha2/268MIMPoi}} &&
\parbox[c]{1em}{\includegraphics[width=\pictureWidthInTable]{MIM/Alpha2/268MIMInf}} &
\parbox[c]{1em}{\includegraphics[width=\pictureWidthInTable]{MIM/Alpha2/268MIMOv}} &&
\parbox[c]{1em}{\includegraphics[width=\pictureWidthInTable]{Parzen/ParzenSigma4_7/Parzen268Inf}} &
\parbox[c]{1em}{\includegraphics[width=\pictureWidthInTable]{Parzen/ParzenSigma4_7/Parzen268Ov}} &&
\parbox[c]{1em}{\includegraphics[width=\pictureWidthInTable]{LIME/LinearEuclidean3/kw3_000_268Inf}} &
\parbox[c]{1em}{\includegraphics[width=\pictureWidthInTable]{LIME/LinearEuclidean3/kw3_000_268Ov}} \\
\parbox[c]{1em}{\includegraphics[width=\pictureWidthInTable]{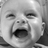}} &&
\parbox[c]{1em}{\includegraphics[width=\pictureWidthInTable]{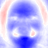}} &
\parbox[c]{1em}{\includegraphics[width=\pictureWidthInTable]{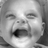}} &&
\parbox[c]{1em}{\includegraphics[width=\pictureWidthInTable]{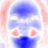}} &
\parbox[c]{1em}{\includegraphics[width=\pictureWidthInTable]{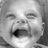}} &&
\parbox[c]{1em}{\includegraphics[width=\pictureWidthInTable]{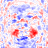}} &
\parbox[c]{1em}{\includegraphics[width=\pictureWidthInTable]{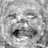}} \\
\parbox[c]{1em}{\includegraphics[width=\pictureWidthInTable]{MIM/Alpha2/490MIMPoi}} &&
\parbox[c]{1em}{\includegraphics[width=\pictureWidthInTable]{MIM/Alpha2/490MIMInf}} &
\parbox[c]{1em}{\includegraphics[width=\pictureWidthInTable]{MIM/Alpha2/490MIMOv}} &&
\parbox[c]{1em}{\includegraphics[width=\pictureWidthInTable]{Parzen/ParzenSigma4_7/Parzen490Inf}} &
\parbox[c]{1em}{\includegraphics[width=\pictureWidthInTable]{Parzen/ParzenSigma4_7/Parzen490Ov}} &&
\parbox[c]{1em}{\includegraphics[width=\pictureWidthInTable]{LIME/LinearEuclidean3/kw3_000_490Inf}} &
\parbox[c]{1em}{\includegraphics[width=\pictureWidthInTable]{LIME/LinearEuclidean3/kw3_000_490Ov}} \\
\parbox[c]{1em}{\includegraphics[width=\pictureWidthInTable]{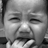}} &&
\parbox[c]{1em}{\includegraphics[width=\pictureWidthInTable]{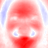}} &
\parbox[c]{1em}{\includegraphics[width=\pictureWidthInTable]{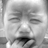}} &&
\parbox[c]{1em}{\includegraphics[width=\pictureWidthInTable]{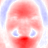}} &
\parbox[c]{1em}{\includegraphics[width=\pictureWidthInTable]{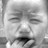}} &&
\parbox[c]{1em}{\includegraphics[width=\pictureWidthInTable]{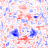}} &
\parbox[c]{1em}{\includegraphics[width=\pictureWidthInTable]{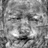}} \\
\parbox[c]{1em}{\includegraphics[width=\pictureWidthInTable]{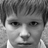}} &&
\parbox[c]{1em}{\includegraphics[width=\pictureWidthInTable]{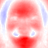}} &
\parbox[c]{1em}{\includegraphics[width=\pictureWidthInTable]{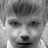}} &&
\parbox[c]{1em}{\includegraphics[width=\pictureWidthInTable]{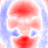}} &
\parbox[c]{1em}{\includegraphics[width=\pictureWidthInTable]{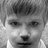}} &&
\parbox[c]{1em}{\includegraphics[width=\pictureWidthInTable]{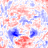}} &
\parbox[c]{1em}{\includegraphics[width=\pictureWidthInTable]{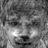}} \\
\parbox[c]{1em}{\includegraphics[width=\pictureWidthInTable]{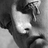}} &&
\parbox[c]{1em}{\includegraphics[width=\pictureWidthInTable]{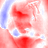}} &
\parbox[c]{1em}{\includegraphics[width=\pictureWidthInTable]{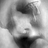}} &&
\parbox[c]{1em}{\includegraphics[width=\pictureWidthInTable]{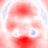}} &
\parbox[c]{1em}{\includegraphics[width=\pictureWidthInTable]{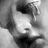}} &&
\parbox[c]{1em}{\includegraphics[width=\pictureWidthInTable]{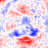}} &
\parbox[c]{1em}{\includegraphics[width=\pictureWidthInTable]{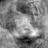}} \\
\parbox[c]{1em}{\includegraphics[width=\pictureWidthInTable]{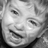}} &&
\parbox[c]{1em}{\includegraphics[width=\pictureWidthInTable]{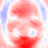}} &
\parbox[c]{1em}{\includegraphics[width=\pictureWidthInTable]{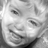}} &&
\parbox[c]{1em}{\includegraphics[width=\pictureWidthInTable]{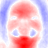}} &
\parbox[c]{1em}{\includegraphics[width=\pictureWidthInTable]{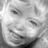}} &&
\parbox[c]{1em}{\includegraphics[width=\pictureWidthInTable]{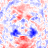}} &
\parbox[c]{1em}{\includegraphics[width=\pictureWidthInTable]{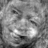}} \\
\parbox[c]{1em}{\includegraphics[width=\pictureWidthInTable]{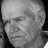}} &&
\parbox[c]{1em}{\includegraphics[width=\pictureWidthInTable]{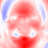}} &
\parbox[c]{1em}{\includegraphics[width=\pictureWidthInTable]{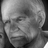}} &&
\parbox[c]{1em}{\includegraphics[width=\pictureWidthInTable]{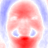}} &
\parbox[c]{1em}{\includegraphics[width=\pictureWidthInTable]{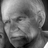}} &&
\parbox[c]{1em}{\includegraphics[width=\pictureWidthInTable]{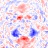}} &
\parbox[c]{1em}{\includegraphics[width=\pictureWidthInTable]{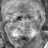}} \\ 
      \bottomrule
  \end{tabular}
\caption{Influence of ten different points of interest computed with the same parameters as in the main paper (Parzen: $\sigma = 4.7$);Lime: Euclidean distance and 3.0 kernel width). } \label{tab:morefaces}
\end{table*}

\newpage
\begin{table*}
  [ht] 
  \centering
  \begin{tabular}
      {p{0.08in} p{0.5in}p{0.55in} p{0.0001in} p{0.2in} p{0.5in}p{0.55in} p{0.00001in} p{0.08in} p{0.5in}p{0.64in}} \toprule
       \multicolumn{3}{c}{LIME (Euclidean distance)}&&\multicolumn{3}{c}{LIME (cosine similarity)}  &&\multicolumn{3}{c}{Parzen}\\
       \cmidrule{1-3}
       \cmidrule{5-7}
       \cmidrule{9-11}
      $\rho$ &  Influence & Shifted POI &&$\mu$ & Influence&  Shifted POI && $\sigma$&Influence&  Shifted POI\\
      \midrule
2.0&
\parbox[c]{1em}{\includegraphics[width=\pictureWidthInTable ]{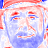}} &
\parbox[c]{1em}{\includegraphics[width=\pictureWidthInTable]{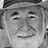}} &
&0.010&
\parbox[c]{1em}{\includegraphics[width=\pictureWidthInTable]{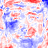}} &
\parbox[c]{1em}{\includegraphics[width=\pictureWidthInTable]{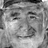}} &
&2.0&
\parbox[c]{1em}{\includegraphics[width=\pictureWidthInTable]{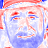}} &
\parbox[c]{1em}{\includegraphics[width=\pictureWidthInTable]{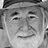}} \\
2.2&
\parbox[c]{1em}{\includegraphics[width=\pictureWidthInTable]{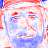}} &
\parbox[c]{1em}{\includegraphics[width=\pictureWidthInTable]{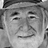}} &
&0.011&
\parbox[c]{1em}{\includegraphics[width=\pictureWidthInTable]{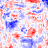}} &
\parbox[c]{1em}{\includegraphics[width=\pictureWidthInTable]{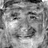}} &
&2.5&
\parbox[c]{1em}{\includegraphics[width=\pictureWidthInTable]{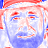}} &
\parbox[c]{1em}{\includegraphics[width=\pictureWidthInTable]{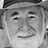}} \\
2.4&
\parbox[c]{1em}{\includegraphics[width=\pictureWidthInTable]{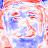}} &
\parbox[c]{1em}{\includegraphics[width=\pictureWidthInTable]{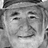}} &
&0.012&
\parbox[c]{1em}{\includegraphics[width=\pictureWidthInTable]{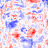}} &
\parbox[c]{1em}{\includegraphics[width=\pictureWidthInTable]{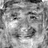}} &
&3.0&
\parbox[c]{1em}{\includegraphics[width=\pictureWidthInTable]{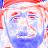}} &
\parbox[c]{1em}{\includegraphics[width=\pictureWidthInTable]{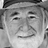}} \\
2.6&
\parbox[c]{1em}{\includegraphics[width=\pictureWidthInTable]{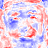}} &
\parbox[c]{1em}{\includegraphics[width=\pictureWidthInTable]{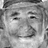}} &
&0.013&
\parbox[c]{1em}{\includegraphics[width=\pictureWidthInTable]{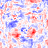}} &
\parbox[c]{1em}{\includegraphics[width=\pictureWidthInTable]{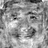}} &
&3.5&
\parbox[c]{1em}{\includegraphics[width=\pictureWidthInTable]{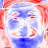}} &
\parbox[c]{1em}{\includegraphics[width=\pictureWidthInTable]{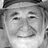}} \\
2.8&
\parbox[c]{1em}{\includegraphics[width=\pictureWidthInTable]{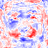}} &
\parbox[c]{1em}{\includegraphics[width=\pictureWidthInTable]{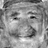}} &
&0.014&
\parbox[c]{1em}{\includegraphics[width=\pictureWidthInTable]{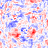}} &
\parbox[c]{1em}{\includegraphics[width=\pictureWidthInTable]{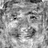}} &
&4.0&
\parbox[c]{1em}{\includegraphics[width=\pictureWidthInTable]{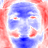}} &
\parbox[c]{1em}{\includegraphics[width=\pictureWidthInTable]{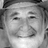}} \\
3.0&
\parbox[c]{1em}{\includegraphics[width=\pictureWidthInTable]{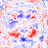}} &
\parbox[c]{1em}{\includegraphics[width=\pictureWidthInTable]{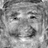}} &
&0.015&
\parbox[c]{1em}{\includegraphics[width=\pictureWidthInTable]{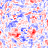}} &
\parbox[c]{1em}{\includegraphics[width=\pictureWidthInTable]{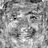}} &
&4.5&
\parbox[c]{1em}{\includegraphics[width=\pictureWidthInTable]{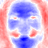}} &
\parbox[c]{1em}{\includegraphics[width=\pictureWidthInTable]{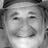}} \\
3.2&
\parbox[c]{1em}{\includegraphics[width=\pictureWidthInTable]{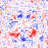}} &
\parbox[c]{1em}{\includegraphics[width=\pictureWidthInTable]{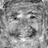}} &
&0.016&
\parbox[c]{1em}{\includegraphics[width=\pictureWidthInTable]{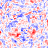}} &
\parbox[c]{1em}{\includegraphics[width=\pictureWidthInTable]{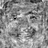}} &
&5.0&
\parbox[c]{1em}{\includegraphics[width=\pictureWidthInTable]{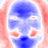}} &
\parbox[c]{1em}{\includegraphics[width=\pictureWidthInTable]{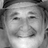}} \\
3.4&
\parbox[c]{1em}{\includegraphics[width=\pictureWidthInTable]{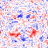}} &
\parbox[c]{1em}{\includegraphics[width=\pictureWidthInTable]{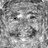}} &
&0.017&
\parbox[c]{1em}{\includegraphics[width=\pictureWidthInTable]{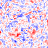}} &
\parbox[c]{1em}{\includegraphics[width=\pictureWidthInTable]{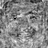}} &
&5.5&
\parbox[c]{1em}{\includegraphics[width=\pictureWidthInTable]{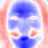}} &
\parbox[c]{1em}{\includegraphics[width=\pictureWidthInTable]{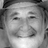}} \\
3.6&
\parbox[c]{1em}{\includegraphics[width=\pictureWidthInTable]{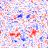}} &
\parbox[c]{1em}{\includegraphics[width=\pictureWidthInTable]{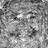}} &
&0.018&
\parbox[c]{1em}{\includegraphics[width=\pictureWidthInTable]{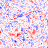}} &
\parbox[c]{1em}{\includegraphics[width=\pictureWidthInTable]{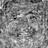}} &
&6.0&
\parbox[c]{1em}{\includegraphics[width=\pictureWidthInTable]{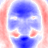}} &
\parbox[c]{1em}{\includegraphics[width=\pictureWidthInTable]{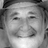}} \\
3.8&
\parbox[c]{1em}{\includegraphics[width=\pictureWidthInTable]{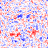}} &
\parbox[c]{1em}{\includegraphics[width=\pictureWidthInTable]{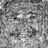}} &
&0.019&
\parbox[c]{1em}{\includegraphics[width=\pictureWidthInTable]{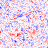}} &
\parbox[c]{1em}{\includegraphics[width=\pictureWidthInTable]{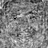}} &
&6.5&
\parbox[c]{1em}{\includegraphics[width=\pictureWidthInTable]{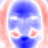}} &
\parbox[c]{1em}{\includegraphics[width=\pictureWidthInTable]{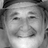}} \\
      \bottomrule
  \end{tabular}
\caption{The effect of different parameters and different distance measures.} \label{tab:parameters}
\end{table*}

\subsection{The Strategic Subject List Dataset}\label{sec:crimes}

The dataset used for this experiment is the anonymized listing of arrest data used by the Chicago Police Department's Strategic Subject Algorithm to create a risk assessment score, known as the Strategic Subject List or SSL \cite{cpd2017ssl}. The dataset contains $398,684$ records; for each record the experiment focuses on 14 attributes and a score. The scores reflect an individual's probability of being involved in a shooting incident either as a victim or an offender. Scores, ranging from 0 (low risk) to 500 (high risk), are calculated for individuals with criminal records using eight attributes (not including race or sex). These attributes are: 
age during the latest arrest (1), number of times being the victim of a shooting incident (2), number of times being the victim of aggravated battery or assault (3), number of prior arrests for violent offenses (4), presence of gang affiliation (5), number of prior narcotic arrests (6), trend in recent criminal activity (7) and number of prior unlawful use of weapon arrests (8). Additionally, in this experiment, we measure the influence of features not available to the algorithm (but which are part of the dataset): sex (9), race (10), occurrence of at least one weapon arrest in past 10 years (11), occurrence of at least one drug arrest in past 10 years (12), the subject being currently on parole (13), and the latest year of contact with the police (14).

The dataset requires some adjustments to the original definitions of MIM and Parzen. First, we adjust MIM to a regression setting by replacing the indicator $\indicator(c(\vec y) = c(\vec x))$ with $c(\vec y) - c(\vec x)$; thus, a point's contribution to the measure is weighted based on how much bigger, or smaller, its SSL score is; Parzen was similarly adjusted.
Moreover, the race feature is categorical with no natural order to its states; thus, the value of race is set to 1 whenever it is the same as the point of interest, and 0 otherwise. 

\setlength\tabcolsep{5pt} % default value: 6pt
\begin{table}
\centering
	\caption{Two example results of MIM influence measurement. Age is given in decades, features 5, 11, 12 and 13 are binary.}
	\label{tab:crimecharts}
	\begin{tabular}{l c d{2} c d{2} }
		\toprule	
		& \multicolumn{2}{c}{Example 1}&\multicolumn{2}{c}{Example 2} \\  
		SSL value & \multicolumn{2}{c}{346}&\multicolumn{2}{c}{448} \\  
		
		\cmidrule(r){2-3} \cmidrule(lr){4-5}   
		Feature  & Value & \text{Infl.}  & Value & \text{Infl.} \\
		\midrule 
		1 Age				&	1.8	&	\clr[	66.35	]	-66.35	&	1.8	&	\clr[	52.3	]	-52.3		\\
		2 Shooting		&	0	&	\clg[	0.19	]	0.19	&	1	&	\clg[	26.97	]	26.97		\\
		3 Battery assault	&	0	&	\clg[	0.28	]	0.28	&	1	&	\clg[	26.7	]	26.7		\\
		4 Violent arrest	&	0	&	\clr[	0.21	]	-0.21	&	5	&	\clg[	100	]	133.28		\\
		5 Gang	affiliation		&	0	&	\clr[	3.21	]	-3.21	&	1	&	\clg[	22.97	]	22.97		\\
		6 Narcotics arrest&	1	&	\clg[	22.12	]	22.12	&	0	&	\clr[	7.64	]	-7.64		\\
		7 Criminal trend			&	0.3	&	\clg[	9.37	]	9.37	&	-0.6	&	\clr[	15.13	]	-15.13		\\
		8 Weapons arrest	&	0	&	\clr[	0.08	]	-0.08	&	0	&	\clr[	0.89	]	-0.89		\\
		9 Is female			&	0	&	\clr[	6.37	]	-6.37	&	0	&	\clr[	6.4	]	-6.4		\\
		10 Race			&	WWH	&	\clg[	38.55	]	38.55	&	BLK	&	\clg[	25.05	]	25.05		\\
		11 Weapon 10y		&	0	&	\clr[	0.56	]	-0.56	&	1	&	\clg[	25.98	]	25.98		\\
		12 Drug 10y		&	1	&	\clg[	21.97	]	21.97	&	0	&	\clr[	6.31	]	-6.31		\\
		13 On parole			&	0	&	\clr[	0.9	]	-0.9	&	0	&	\clr[	0.94	]	-0.94		\\
		14 Last contact	&	2016	&	\clg[	45.32	]	45.32	&	2016	&	\clg[	46.2	]	46.2		\\
		\bottomrule
	\end{tabular}

\end{table}
\setlength\tabcolsep{6pt} % default value: 6pt
While MIM and Parzen have similar outputs on the data: the average cosine similarity between the two measures is $\ge0.94$ (taken over $\ge 8000$ randomly sampled points). Two example MIM influence measurements are depicted in Table~ \ref{tab:crimecharts}.

\citename{asher2017ssl} analyze the eight features used by the SSL Algorithm. Their results suggest that age has significant negative influence on the SSL score, while other features contribute positively in varying degrees. 
The MIM outputs confirm this statement, but suggest that the degree to which the features contribute to the SSL score vary greatly between cases. As exemplified in Table~ \ref{tab:crimecharts}, the influence of a single crime-related event tends to grow with the number of events of the same type. In the vast majority of cases, age has significant negative influence on the SSL score. Interestingly, the latest date of contact with the police often has significant positive influence on the score, despite not being used by the algorithm directly. In other cases, race has significant influence as well. This last point highlights some of the issues of using a data-driven method: features that are not used by the classifier can be assigned a significant amount of influence; this is simply because race is correlated with other variables used by the SSL algorithm. Indeed, in order to ascertain that race is not an input to the algorithm, black-box access is required.

%Conclusion
\section{Conclusions and Future Work}
We present a novel characterization of data-driven influence measurement. Our measure is uniquely derived from a set of reasonable properties; what's more, it optimizes a natural objective function. 
%Our characterization subsumes known influence measures proposed in the literature. In particular, we show that when we take the aggregate values of $\phi_i(S)$ over all $S \subseteq N$, MIM becomes the Banzhaf index in cooperative games. Furthermore, the value $\phi_i(\emptyset)$ is the measure proposed by \citename{balkanski2017cost} for measuring influence in a data-dependent cooperative game setting. 

Taking a broader perspective, axiomatic influence analysis in data domains is an important research direction: it allows us to rigorously discuss the {\em underlying norms} that govern our explanations. Different axioms result in alternative measures, and mathematically justifying one's choice of influence measures makes them more {\em accountable}: when explaining the behavior of classifiers in high-stakes domains, having {\em provably sound} measures offers mathematical backing to those using them. More importantly, an axiomatic approach allows one to justify the approach to non-academic stakeholders: while the proofs in this paper might be rather obscure to those without the requisite background, the axioms we use can be easily explained.

While MIM offers an interesting perspective on influence measurement, it is but a first step. First, our analysis is currently limited to binary classification domains. It is possible to naturally extend our results to regression domains, e.g. by replacing the value $\indicator(c(\vec x) = c(\vec y))$ with $c(\vec x) - c(\vec y)$; however, it is not entirely clear how one might define influence measures for multiclass domains. 
%It is still possible to retain $\indicator(c(\vec x) = c(\vec y))$ as the measure of `closeness' between classification outputs, but we believe that this may result in a somewhat coarse analysis. This is especially true in cases where there is a large number of possible output labels. One possible solution for the multiclass case would be to define a distance metric over  output labels; however, the choice of metric would greatly impact the results. 

Current numerical influence measures limit their explanations to individual features; they do not capture joint effect, let alone more complex feature interactions (the only exception to this is LIME, which, at least in theory, allows fitting non-linear classifiers in the local region of the point of interest). Designing provably sound methods for measuring the effect of pairwise (or $k$-wise) interactions amongst features is a major challenge. 
Non-linear explanations naturally trade-off {\em accuracy} and {\em interpretability}. A linear explanation is easy to understand, but lacks the explanatory power of a measure that captures $k$-wise interactions. 
%Indeed, a measure that captures all levels of feature interactions would be equivalent to a local approximation of the original classifier, which may not be feasible to achieve, nor easy to interpret. A better understanding of this behavior would be an important step in the design of influence measures.

Finally, it is important to translate our numerical measure to an actual human-readable report. \citename{Datta2016algorithmic} propose using linear explanations as {\em transparency reports}; more advanced methods use subroutines from the classifier's source code to explain its behavior \cite{datta2017proxy,ribeiro2016programs}. Mapping numerical measures to actual human-interpretable explanations is an important open problem; we believe that analyses such as ours form the fundamental basis for making black-box systems transparent, and ultimately more accountable.
\paragraph{acknowledgments}
The authors thank the AAAI 2019 reviewers and the FAT/ML 2018 participants for their useful suggestions. The authors are supported by an NRF fellowship \#R-252-000-750-733.

\vskip 0.2in
\bibliography{literature}
\bibliographystyle{theapa}

\end{document}